\newcommand{\eat}[1]{}
\newcommand{\vpar}{\vspace*{.3em}}
\newcommand\stitle[1]{\vpar\noindent{\bf #1\/}}
\newtheorem{theorem}{Theorem}
\newtheorem{corollary}[theorem]{Corollary}
\newtheorem{lemma}[theorem]{Lemma}
\newtheorem{definition}[theorem]{Definition}
\newcommand{\abs}[1]{\lvert#1\rvert}
\newcommand{\defn}{:=}
\newcommand{\vnorm}[1]{\left\|#1\right\|}
\def\argmin{{\rm argmin}}
\def\var{{\rm Var}}
\def\E{{\ensuremath{\mathbb E}}}
\def\Prob{{\rm P}}
\long\def\comment#1{}
\newcommand{\R}{\ensuremath{\mathbb R}}
\newcommand{\sign}{\ensuremath{{\rm sign}}}
\newcommand{\M}{\mathcal{M}}
\newcommand{\acal}{\mathcal{A}}
\newcommand{\gcal}{\mathcal{G}}
\newcommand{\hcal}{\mathcal{H}}
\newcommand{\mcal}{\mathcal{M}}
\newcommand{\xcal}{\mathcal{X}}
\newcommand{\restrict}[1]{\ensuremath{\left.#1\right|}}
\newcommand{\ip}[1]{\ensuremath{\left\langle #1 \right\rangle}}
\newcommand{\td}{\tilde}
\renewcommand{\P}{\ensuremath{\mathbb P}}
\newcommand{\Id}{\ensuremath{\mathbbm{1}}}
\begin{document} 

\title{An Analysis of the Convergence of Graph Laplacians}
\author{
        Daniel Ting \\
        Department of Statistics\\
        University of California, Berkeley\\
            \and
        Ling Huang\\
        Intel Research\\
        \and
        Michael Jordan \\
        Department of EECS and Statistics\\
        University of California, Berkeley\\
}





\maketitle

\begin{abstract}
Existing approaches to analyzing the asymptotics of 
graph Laplacians typically 
assume a well-behaved kernel function with smoothness
assumptions. We remove the smoothness assumption
and generalize the analysis of graph
Laplacians to include previously unstudied graphs
including kNN graphs.
We also introduce a kernel-free framework to analyze 
graph constructions with shrinking neighborhoods in general
and apply it to analyze locally linear embedding (LLE).
We also describe how for a given limiting Laplacian operator
desirable properties such as
a convergent spectrum and sparseness
can be achieved choosing the appropriate graph construction.

\end{abstract}

\section{Introduction}
Graph Laplacians have become a core technology throughout machine learning. 
In particular, they have appeared in clustering~\cite{Onclusterings, ConsistencySC},
dimensionality reduction~\cite{BelkinLapEigenmaps, Nadler06diffusionmaps}, 
and semi-supervised learning~\cite{SSLManifolds, SSLGaussian}.  

While graph Laplacians are but one member of a broad class of methods that 
use local neighborhood graphs to model data lying on a low-dimensional manifold 
embedded in a high-dimensional space, they are distinguished by their appealing 
mathematical properties, notably: (1) the graph Laplacian is the infinitesimal 
generator for a random walk on the graph, and (2) it is a discrete approximation 
to a weighted Laplace-Beltrami operator on a manifold, an operator which has 
numerous geometric properties and induces a smoothness functional.  These mathematical 
properties have served as a foundation for the development of a growing 
theoretical literature that has analyzed learning procedures based on the
graph Laplacian.  To review briefly, \citet{Bousquet03measurebased} proved 
an early result for the convergence of the unnormalized graph Laplacian 
to a regularization functional that depends on the squared density $p^2$. 
\citet{Belkin05FoundationLap} demonstrated the pointwise convergence of 
the empirical unnormalized Laplacian to the Laplace-Beltrami operator
on a compact manifold with uniform density.  \citet{LafonThesis} and 
\citet{Nadler06diffusionmaps} established a connection between graph 
Laplacians and the infinitesimal generator of a diffusion process. 
They further showed that one may use the degree operator to control
the effect of the density.
\citet{Hein05colt} 
combined and generalized these results for weak and pointwise (strong) 
convergence under weaker assumptions as well as providing rates for the 
unnormalized, normalized, and random walk Laplacians. They also make explicit
the connections to the weighted Laplace-Beltrami operator.
 \citet{singer06LapConvRate} 
obtained improved convergence rates for a uniform density.  \citet{gine05laplacian} 
established a uniform convergence result and 
functional central limit theorem to extend the pointwise convergence results.
\citet{ConsistencySC} and \citet{BelkinConvLapEigenmaps} presented spectral 
convergence results for the eigenvectors of graph Laplacians in the fixed 
and shrinking bandwidth cases respectively.  

Although this burgeoning literature has provided many useful insights, 
several gaps remain between theory and practice.  Most notably, in constructing 
the neighborhood graphs underlying the graph Laplacian, several choices
must be made, including the choice of algorithm for constructing the 
graph, with $k$-nearest-neighbor (kNN) and kernel functions providing the main
alternatives, as well as the choice of parameters 
($k$, kernel bandwidth, normalization weights).  
These choices can lead to the graph Laplacian generating fundamentally 
different random walks and approximating different weighted Laplace-Beltrami operators. 
The existing theory has focused on one specific choice in which graphs are
generated with smooth kernels with shrinking bandwidths.  But a variety of
other choices are often made in practice, including kNN graphs, $r$-neighborhood 
graphs, and the ``self-tuning'' graphs of \citet{ZelnikSelfTuningSpectral}. 
Surprisingly, few of the existing  convergence results apply to these
choices (see \citet{MaierNCutConvergence} for an exception).

This paper provides a general theoretical framework for analyzing 
graph Laplacians and operators that behave like Laplacians.  
Our point of view differs from that found in the existing literature; specifically, 
our point of departure is a stochastic process framework that utilizes the
characterization of diffusion processes via drift and diffusion terms.
This 
yields a general kernel-free framework for analyzing graph Laplacians with shrinking 
neighborhoods.  We use it to extend the pointwise results of 
\citet{HeinGraphNormalizations} to cover non-smooth kernels and introduce location-dependent 
bandwidths.  Applying these tools we are able to identify the asymptotic limit for a 
variety of graphs constructions including kNN, $r$-neighborhood, and ``self-tuning'' 
graphs.  We are also able to provide an analysis for Locally Linear Embedding
\citep{roweis00LLE}.

A practical motivation for our interest in graph Laplacians based on kNN graphs
is that these can be significantly sparser than those constructed using kernels,
even if they have the same limit.  Our framework allows us to establish this
limiting equivalence.  On the other hand, we can also exhibit cases in which 
kNN graphs converge to a different limit than graphs constructed from kernels, 
and that this explains some cases where kNN graphs perform poorly.  Moreover,
our framework allows us to generate new algorithms: in particular, by using
location-dependent bandwidths we obtain a class of operators that have nice 
spectral convergence properties that parallel those of the normalized Laplacian 
in \citet{ConsistencySC}, but which converge to a different class of limits.

\eat{
\begin{table*}
\begin{center}
\begin{tabular}[center]{lcccccc}
Paper & Convergence & Laplacians & ``Bandwidth''\\
\hline
\cite{HeinGraphNormalizations} 
& Pointwise & R.W., unnormalized, normalized & 
shrinking\\
\cite{gine05laplacian} 
& uniform, CLT & unnormalized & shrinking\\
\cite{ConsistencySC}
& Spectrum & unnormalized, normalized & fixed\\
\cite{BelkinConvLapEigenmaps}
& Spectrum & unnormalized & shrinking\\
\cite{MaierNCutConvergence}
& Ncut, a.s. & KNN, $r$-neighborhood & shrinking
\end{tabular}
\end{center}
\caption{\small{\it{Existing results on convergence of Laplacians.
\cite{BelkinConvLapEigenmaps} and \cite{gine05laplacian} 
assume uniform densities in their proofs but suggest that this
requirement is not essential.
All others explicitly account for non-uniform distribution.
}}}
\label{tbl:related}
\vskip -0.1in
\end{table*}
}

\section{The Framework}

Our work exploits the connections among 
diffusion processes, elliptic operators (in particular the weighted
Laplace-Beltrami operator), 
and stochastic differential equations (SDEs). This builds upon 
the diffusion process viewpoint in \citet{Nadler06diffusionmaps}. 
Critically, we make the connection to the drift and diffusion terms
of a diffusion process. This allows us to present a kernel-free framework
for analysis of graph Laplacians as well as giving a better
intuitive understanding of the limit diffusion process.

We first give a brief overview of these connections and 
present our general framework
for the asymptotic analysis of graph Laplacians
as well as providing some relevant 
background material.
We then introduce 
our assumptions and derive our main results for the 
limit operator for a wide range
of graph construction methods.
We use these to calculate
 asymptotic limits for specific graph constructions.


\subsection{Relevant Differential Geometry}
\label{sec:diff geom}
Assume $\M$ is a $m$-dimensional manifold embedded in $\R^b$.
To identify the asymptotic infinitesimal generator of a diffusion
on this manifold, we will derive the drift and 
diffusion terms in normal coordinates at each point.
We refer the reader to \citet{boothby} for an exact definition of 
normal coordinates.  For our purposes it suffices to note that
normal coordinates are coordinates in $\R^m$ that behave 
roughly as if the neighborhood was projected onto the tangent 
plane at $x$. 
The extrinsic coordinates are the coordinates 
$\R^b$ in which the manifold is embedded.  
Since the density, 
and hence integration, is defined with respect to the manifold, 
we must relate to link normal coordinates 
$s$ around a point $x$ with the extrinsic coordinates $y$. 
This relation may be given as follows:

\begin{align}
\label{eqn:embedCoord}
y-x = H_xs + L_x(ss^T) + O(\vnorm{s^3}),
\end{align} 
where 
$H_x$ is a linear isomorphism between 
the normal coordinates in $R^m$ and the 
$m$-dimensional tangent plane $T_x$ at $x$.
$L_x$ is a linear operator describing the curvature of the manifold
and takes $m \times m$ positive semidefinite matrices
into the space orthogonal to the tangent plane, $T_x^{\perp}$.
More advanced readers will note
that this statement is Gauss' lemma 
and $H_x$ and $L_x$ are related to the first and second fundamental forms.

We are most interested in limits involving the weighted 
Laplace-Beltrami operator, a particular
second-order differential operator.
\subsection{Weighted Laplace-Beltrami operator}
\begin{definition}[Weighted Laplace-Beltrami operator]
\label{def:Laplace-Beltrami}
The weighted Laplace-Beltrami operator with respect to the density $q$
is the second-order differential operator defined by
$\Delta_{q} \defn \Delta_{\M} - \frac{\nabla q^T}{q} \nabla$
where $\Delta_{\M} \defn div \circ \nabla$ 
is the unweighted Laplace-Beltrami operator. 
\end{definition}
It is of particular interest
since it induces a smoothing functional for $f \in C^2(\M)$ with support
contained in the interior of the manifold:
\begin{align}
 \langle f , \Delta_{q} f \rangle_{L(q)} = 
\vnorm{ \nabla f}_{L_2(q)}^2.
\end{align} 
Note that existing literature on asymptotics
of graph Laplacians often refers to the $s^{th}$
weighted Laplace-Beltrami operator as $\Delta_{s}$ where $s \in \R$.
This is $\Delta_{p^s}$ in our notation. 
For more information on the weighted Laplace-Beltrami operator see 
\citet{grigor2006heat}.

\subsection{Equivalence of Limiting Characterizations}

We now establish the promised connections among
elliptic operators, diffusions, SDEs, and graph Laplacians.
We first show that elliptic operators define 
diffusion processes and SDEs and vice versa.  An elliptic 
operator $\gcal$ is a second order differential operator of the form
\[
\gcal f(x) = \sum_{ij} a_{ij}(x) 
\frac{\partial^2f(x)}{\partial x_i \partial x_j} 
+ \sum_i b_i(x) \frac{\partial f(x)}{\partial x_i } + c(x) f(x),
\]
where the $m \times m$ coefficient matrix $(a_{ij}(x))$ is positive 
semidefinite
for all $x$.
If we use normal coordinates for a manifold, we see that
the weighted Laplace-Beltrami operator $\Delta_q$ is a special case
of an elliptic operator with $(a_{ij}(x)) = I$, the identity matrix,
$b(x) = \frac{\nabla q(x)}{q(x)}$,
and $c(x) = 0$. 
Diffusion processes are related via a result by Dynkin
which states that given a diffusion process, the generator of the process
is an elliptic operator. 

The (infinitesimal) generator $\gcal$ of a diffusion process $X_t$
is defined as
\begin{align*}
\gcal f(x) \defn \lim_{t \to 0} \frac{\E_x f(X_t) - f(x)}{t}
\end{align*}
when the limit exists and convergence is uniform over $x$.
Here $\E_x f(X_t) = \E(f(X_t) | X_0 = x)$.
A converse relation holds as well. 
The Hille-Yosida theorem characterizes when a linear operator,
such as an elliptic operator, is the generator of a stochastic process.
We refer the reader to \cite{Kallenberg} for proofs.

A time-homogeneous stochastic differential equation (SDE) defines a 
diffusion process as a solution (when one exists) to the equation
\[
dX_t = \mu(X_t)dt + \sigma(X_t) dW_t,
\]
where $X_t$ is a diffusion process taking values in $\R^d$.
The terms $\mu(x)$ and $\sigma(x)\sigma(x)^T$ are the 
{\em drift} and {\em diffusion} terms of the process.

By Dynkin's result, the generator $\gcal$ of this process defines
an elliptic operator and a simple calculation shows the operator is
\[
\gcal f(x) = \frac{1}{2} \sum_{ij} \left(\sigma(x)\sigma(x)^T\right)_{ij}
\frac{\partial^2 f(x)}{\partial x_i \partial x_j}
+ \sum_i \mu_i(x) \frac{\partial f(x)}{\partial x_i }.
\]
In such diffusion processes there is no absorbing state and the term
in the elliptic operator $c(x) = 0$.
We note that one may also consider more general diffusion processes 
where $c(x) \leq 0$. When $c(x) < 0$
then we have the generator of a diffusion process with killing where $c(x)$ 
determines the killing rate of the diffusion at $x$.

To summarize, 
we see that a SDE or diffusion process define
an elliptic operator, and importantly, the coefficients are the
drift and diffusion terms, and the reverse relationship
holds: An elliptic operator defines a diffusion 
under some regularity conditions on the coefficients.



All that remains then is to connect diffusion processes in continuous space
to graph Laplacians on a finite set of points. Diffusion approximation
theorems provide this connection.
We state one version of such a theorem
.

\begin{theorem}[Diffusion Approximation]
\label{thm:DiffusionApprox}
Let $\mu(x)$ and $\sigma(x)\sigma(x)^T$ be drift and diffusion terms
for a diffusion process
defined on a compact set $S \subset \R^b$, and let
and $G$ be the corresponding infinitesimal generator.
Let $\{Y_t^{(n)}\}_t$ be Markov chains with 
transition matrices $P_n$ on state spaces $\{x_i\}_{i=1}^n$
for all $n$, and let $c_n > 0$ define a sequence of scalings. 
Put
\begin{alignat*}{3}
&\hat{\mu}_n(x_i) &=&  c_n \E(Y_1^{(n)} -x_i | Y_0^{(n)} = x_i) \\
\hat{\sigma}_n(x_i)&\hat{\sigma}_n(x_i)^T &=&  c_n \var(Y_1^{(n)}| Y_0^{(n)} = x_i).
\end{alignat*}
Let  $f \in C^2(S)$. If for all $\epsilon > 0$
\begin{gather*}
\hat{\mu}_n(x_i) \to \mu(x_i), \\
\hat{\sigma}_n(x_i)\hat{\sigma}_n(x_i)^T \to \sigma(x_i)\sigma(x_i)^T,  \\
c_n \sup_{i \leq n}  \Prob\left( \left. \vnorm{Y_1^{(n)} -x_i} > \epsilon \right| Y_0^{(n)} = x_i\right) \to 0,
\end{gather*}
then the  generators $A_n f = c_n(P_n-I) f \to Gf$ 
Furthermore, for any bounded $f$ and $t_0 > 0$ and
the continuous-time transition kernels $T_n(t) = exp(tA_n)$
and $T$ the transition kernel for $G$, we have
$T_{n}(t) f \to T(t)f$ uniformly in $t$ for $t < t_0$. 
\end{theorem}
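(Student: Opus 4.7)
My plan is to split the theorem into two parts: first, establish the pointwise generator convergence $A_n f(x_i) \to Gf(x_i)$ by a Taylor expansion argument, and second, upgrade this to the semigroup statement via the Trotter--Kurtz approximation theorem from semigroup theory (as found, for instance, in Kallenberg, which is already cited in the paper).

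For the generator step, fix $f \in C^2(S)$ and expand
\[
f(Y_1^{(n)}) - f(x_i) = \nabla f(x_i)^T(Y_1^{(n)} - x_i) + \tfrac{1}{2}(Y_1^{(n)} - x_i)^T \nabla^2 f(x_i)(Y_1^{(n)} - x_i) + R_n,
\]
with remainder $|R_n| \leq \eta(\|Y_1^{(n)} - x_i\|)\,\|Y_1^{(n)} - x_i\|^2$, where $\eta(\delta) \to 0$ as $\delta \to 0$ by uniform continuity of $\nabla^2 f$ on the compact set $S$. Multiply by $c_n$, take conditional expectation given $Y_0^{(n)} = x_i$, and split each expectation into its restriction to the small-jump event $\{\|Y_1^{(n)} - x_i\| \leq \epsilon\}$ and its complement. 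Since $f, \nabla f, \nabla^2 f$ are bounded on $S$, the large-jump contribution is bounded by a constant (depending only on $f$ and $\mathrm{diam}(S)$) times $c_n \Prob(\|Y_1^{(n)} - x_i\| > \epsilon \mid Y_0^{(n)} = x_i)$, which vanishes by the third hypothesis. On the small-jump event, the identity $c_n \E[(Y_1^{(n)}-x_i)(Y_1^{(n)}-x_i)^T] = \hat\sigma_n\hat\sigma_n^T + \hat\mu_n\hat\mu_n^T / c_n$ combined with $c_n \E(Y_1^{(n)}-x_i) = \hat\mu_n$ yields the main term $\nabla f(x_i)^T \hat\mu_n(x_i) + \tfrac{1}{2}\trace(\hat\sigma_n(x_i)\hat\sigma_n(x_i)^T \nabla^2 f(x_i))$, up to error terms of order $\eta(\epsilon)\trace(\hat\sigma_n\hat\sigma_n^T)$ plus $O(1/c_n)$. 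Sending $n \to \infty$ and then $\epsilon \to 0$ gives $A_n f(x_i) \to Gf(x_i)$ by the first two hypotheses.

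To obtain semigroup convergence, I would invoke Trotter--Kurtz. The natural choice of Banach space is $B = C(S)$ with the sup norm, with projections $\pi_n : C(S) \to \R^n$ given by restriction to $\{x_i\}_{i=1}^n$. The limit generator $G$ generates a Feller semigroup on $B$, and each $A_n = c_n(P_n - I)$ generates a contraction semigroup on $\R^n$ (since $P_n - I$ is a conservative rate matrix). Trotter--Kurtz requires that on a core for $G$---namely $C^2(S)$---one has $\|\pi_n A_n f - \pi_n Gf\|_\infty \to 0$ and that $T_n(t)$ and $T(t)$ be compatible contractions. The pointwise generator step gives the value-wise statement; the sup-norm version follows by strengthening the convergence to be uniform in $i$, using equicontinuity of $\mu, \sigma\sigma^T, \nabla f, \nabla^2 f$ on the compact $S$ together with the uniform-in-$i$ concentration hypothesis. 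Trotter--Kurtz then delivers strong semigroup convergence uniform in $t$ on compact intervals, which is the second conclusion.

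The main obstacle, as I see it, is precisely this uniformization step: the hypotheses on $\hat\mu_n$ and $\hat\sigma_n\hat\sigma_n^T$ are stated pointwise in $x_i$, while the sample points $\{x_i\}$ themselves vary with $n$, yet Trotter--Kurtz wants convergence in the sup norm over $S$. Compactness of $S$ and the uniform character of the third hypothesis (supremum over $i$) are exactly what make this possible, and tracking the dependence of the Taylor error on $x_i$ uniformly, rather than pointwise, is the delicate accounting the proof must perform.
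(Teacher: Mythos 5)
Your proposal is correct and follows essentially the same route as the paper's proof: a second-order Taylor expansion whose main terms are matched to $\hat{\mu}_n$ and $\hat{\sigma}_n\hat{\sigma}_n^T$, a small-jump/large-jump split that kills the remainder using the uniform jump-probability hypothesis, and then the standard Trotter--Kurtz/Ethier--Kurtz theorem on the core $C^2$ to pass from generator convergence to uniform semigroup convergence. The only cosmetic difference is that the paper first verifies convergence on quadratic polynomials ($f=1$, $f(x)=x$, $f(x)=xx^T$) and then bounds the Taylor remainder, while you expand $f$ directly; the uniformity issue you flag is handled in the paper by reading the hypotheses (and its notion of convergence on changing domains) as sup-norm convergence over the sample points.
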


\begin{proof}

We first examine the case when $f(x) = x$.
By assumption,
\begin{align*}
A_n \pi_n x &= c_n(P_n -I)x = c_n \E(Y_1^{(n)} -x_i | Y_0^{(n)} = x_i) \\
&= \mu_n(x) \to \mu(x) = A x.
\end{align*}
Similarly if $f(x) = x x^T$, 
$\vnorm{ A_n \pi_n f - A f}_\infty \to 0$.
If $f(x)=1$, then $A_n \pi_n f = \pi_n A f = 0$.
Thus, by linearity of $A_n$,
 $A_n \pi_n f \to A f$ for any quadratic polynomial $f$.

Taylor expand $f$ to obtain
$f(x + h) = q_x(h) + \delta_x(h)$ where $q_x(h)$ is a quadratic 
polynomial in $h$.
Since the second derivative
is continuous and the support of $f$ is compact,
$\sup_{x \in \M} \delta_x(h) =o( \vnorm{h}^2 )$
and $\sup_{x,h} \delta_x(h) < M$ for some constant $M$.

Let $\Delta_n = Y_1^{(n)} -x_i$. We may bound $A_n$ acting
on the remainder term $\delta_x(h)$ by
\begin{align*}
\sup_x A_n \delta_x &= c_n \E( \delta_x( \Delta_n ) | Y_0^{(n)} = x) \\
&\leq \sup_x c_n \E( \delta_x( \Delta_n) \Id(\vnorm{\Delta_n} \leq \epsilon) 
| Y_0^{(n)} = x) + \\
& \qquad \qquad M \sup_x c_n  \P( \vnorm{\Delta_n} > \epsilon | Y_0^{(n)} = x) \\
&= o( c_n \E( \vnorm{\Delta_n}^2| Y_0^{(n)} = x)  ) + M \sup_x c_n\P( \vnorm{\Delta_n} > \epsilon | Y_0^{(n)} = x) \\
&= o( 1 )
\end{align*}
where the last equality holds by the assumptions on the uniform convergence
of the diffusion term $\hat{\sigma}_n \hat{\sigma}_n^T$ and 
on the shrinking jumpsizes.

Thus, $A_n \pi_n f \to A f$ for any $f \in C^2(\M)$.

The class of functions $C^2(\M)$ is dense in $L_{\infty}(\M)$
and form a core for the generator $A$.
Standard theorems give equivalence between
strong convergence of infinitesimal generators on a core and
uniform strong convergence of transition kernels on a Banach space
(e.g. Theorem 1.6.1 in ~\cite{EthierKurtz}).
\end{proof}

We remark that though the results we have discussed thus far are stated in the context of the 
extrinsic coordinates $\R^b$, 
we describe appropriate extensions in terms of normal coordinates 
in the appendix. 


\subsection{Assumptions}
\label{sec:assumptions}
%
%
We describe here the assumptions and notation for the
rest of the paper.
The following assumptions we will refer to as the {\em standard assumptions}.

Unless stated explicitly otherwise, 
let $f$ be an arbitrary function in $C^2(\M)$.

\stitle{Manifold assumptions.} Assume $\M$ us a smooth $m$-dimensional manifold 
isometrically embedded in $\R^b$ via the map $i: \M \to \R^b$. 
The essential conditions that we require on the manifold are
\begin{enumerate}
\item Smoothness, the map $i$ is a smooth embedding. 
\item A single radius $h_0$ such that for all $x \in supp(f)$,
 $\M \cap B(x,h_0)$ is a neighborhood of $x$ with normal coordinates, and
\item Bounded curvature of the manifold over $supp(f)$, i.e.
that the second fundamental form is bounded .
\end{enumerate}
When the manifold is smooth and compact, then these conditions are satisfied.

Assume points $\{x_i\}_{i=1}^\infty$ are sampled i.i.d. from a density
$p \in C^2(\M)$ with respect to the natural volume element of the manifold, 
and that $p$ is bounded away from 0.

\stitle{Notation.} For 
brevity, we will always use $x,y \in \R^b$ to be points on $\M$ expressed
in extrinsic coordinates and $s \in \R^m$ to be normal coordinates
for $y$ in a neighborhood centered at $x$. 
Since they represent the same point, 
we will also use $y$ and $s$ interchangeably as function arguments, 
i.e. $f(y) = f(s)$. Whenever we take a gradient,
it is with respect to normal coordinates.


\stitle{Generalized kernel.}
Though we use a kernel free framework,
our main theorem utilizes a kernel, but one that
is generalizes previously studied kernels 
by 1) considering non-smooth base kernels $K_0$,
2) introducing location dependent bandwidth functions $r_x(y)$, 
and 3) considering general weight functions $w_x(y)$. 
Our main result also handles 4) random weight and bandwidth functions.

Given a
bandwidth scaling parameter $h > 0$, 
define a new kernel by
\begin{align}
\label{eqn:kernel def}
K(x,y) = w_x(y) K_0 \left(\frac{\vnorm{y-x}}
{h r_x(y)}\right).
\end{align}

Previously analyzed constructions for smooth kernels with compact support
are described by this more general kernel with $r_x = 1$
and $w_x(y) = d(x)^{-\lambda}d(y)^{-\lambda}$ 
where $d(x)$ is the degree function and
$\lambda \in \R$ is some constant.

The directed kNN graph is obtained if $K_0(x,y) = \Id( \vnorm{x-y} \leq 1)$, 
$r_x(y) = $ distance to the $k^{th}$ nearest neighbor of $x$, 
and $w_x(y) = 1$ for all $x,y$. 

We note that the kernel $K$ is not necessarily symmetric; however, 
if $r_x(y) = r_y(x)$ and $w_x(y) = w_y(x)$ for all 
$x,y \in \M$ then the kernel is symmetric and the corresponding 
unnormalized Laplacian is positive semi-definite.

\stitle{Kernel assumptions.} 
We now introduce our assumptions on the choices $K_0, h, w_x, r_x$
that govern the graph construction.
Assume that the base kernel $K_0 : \R_+ \to \R_+$ 
has bounded variation and compact support
and $h_n > 0$ form a sequence of bandwidth scalings.
For (possible random) location dependent bandwidth and weight functions
 $r_x^{(n)}(\cdot) > 0, w_x^{(n)}(\cdot) \geq 0$,
assume that they converge to $r_x(\cdot), w_x(\cdot)$ respectively 
and the convergence is uniform over $x \in \M$. Further assume
they have
Taylor-like expansions for all $x,y \in \M$ with $\vnorm{x-y} < h_n$
\begin{align}
\label{eqn:bw func}
\begin{split}
r_x^{(n)}(y) &= r_x(x) + (\dot{r}_x(x) + \alpha_x \sign(u_x^Ts) u_x)^Ts 
+ \epsilon_r^{(n)}(x,s) \\
w_x^{(n)}(y) &= w_x(x) + \nabla w_x(x)^T s + \epsilon_w^{(n)}(x,s) 
\end{split}
\end{align}
where the approximation error is uniformly bounded by 
 \begin{align*}
 \sup_{x\in \M, \vnorm{s} < h_n} |\epsilon_r^{(n)}(x,s)| &= O(h_n^2) \\ 
 \sup_{x\in \M, \vnorm{s} < h_n} |\epsilon_w^{(n)}(x,s)| &= O(h_n^2) 
 \end{align*}

We briefly motivate the choice of assumptions.
The bounded variation condition allows for non-smooth base kernels but 
enough regularity to obtain limits.
The Taylor-like expansions allow give conditions where the limit
is tractable to analytically compute as well as allowing for randomness
in the remainder term as long as it is of the correct order.
The particular expansion for the location dependent bandwidth
allows one to analyze undirected kNN graphs, which exhibit 
a non-differentiable location dependent bandwidth (see section 
\ref{sec:undirected knn}).
Note that we do not constrain
the general weight functions $w_x^{(n)}(y)$ to be a power
of the degree function, $d_n(x)^{\alpha} d_n(y)^{\alpha}$
nor impose a particular functional form for 
location dependent bandwidths $r_x$. 
This gives us two degrees
of freedom, which allows the same asymptotic limit
be obtained for an entire class of parameters 
governing the graph construction. 
In section \ref{sec:construction reasons}, 
we discuss one may choose a graph construction that
has more attractive finite sample properties than other constructions
that have the same limit.

\stitle{Functions and convergence.} We define here what we mean by convergence
when the domains of the functions are changing.
When take $g_n \to g$ 
where $domain(g_n) = \xcal_n \subset \M$, 
to mean
$\vnorm{g_n - \pi_n g }_{\infty} \to 0$
where $\pi_n g = \restrict{g}_{\xcal_n}$ 
is the restriction of $g$ to $\xcal_n$. 
Likewise, 
for operators $T_n$ on functions with domain $\xcal_n$, 
we take $T_n g = T_n \pi_n g$. 
Convergence of operators 
$T_n \to T$ means $T_nf \to Tf$ for all $f \in C^2(\M)$.
When $\xcal_n = \M$ for all $n$, 
this is convergence in the strong operator topology
under the $L_{\infty}$ norm.

We consider the limit of the random walk Laplacian defined by
as $L_{rw} = I - D^{-1} W$ where $I$ is the identity, $W$ is the matrix of edge weights,
and $D$ is the diagonal degree matrix.

\subsection{Main Theorem}
Our main result is stated in the following theorem.
\begin{theorem}
\label{thm:mainTheorem}
Assume the standard assumptions hold eventually with probability 1. 
If the bandwidth scalings $h_n$
satisfy $h_n \downarrow 0$ and $nh_n^{m+2} / \log n \to \infty$,
then for graphs constructed using
the kernels
\begin{align}
\label{eqn:kernel}
K_n(x,y) = {w_x^{(n)}(y)} K_0\left(\frac{\vnorm{y-x}}
{h_n r_x^{(n)}(y)}\right)
\end{align}
there exists a 
constant $Z_{K_0,m} > 0$ depending only on the base kernel
$K_0$ and the dimension $m$ such that for 
$c_n = Z_{K_0,m} / h^{2}$,
\[
-c_n L_{rw}^{(n)}f \to Af
\]
where $A$ is
the infinitesimal generator of a diffusion process with
the following drift and diffusion terms given in normal coordinates:
\begin{align*}
&\mu_s(x) = 
r_x(x)^{2} \left( \frac{\nabla p(x)}{p(x)}
+ \frac{\nabla w(x)}{w(x)} + (m+2)\frac{\dot{r}_x(x)}{r_x(x)} \right),& \\
&\sigma_s(x)\sigma_s(x)^T = r_x(x)^{2} I&
\end{align*}
where $I$ is the $m \times m$ identity matrix.

%
\end{theorem}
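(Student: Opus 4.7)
The strategy is to invoke the Diffusion Approximation Theorem (Theorem~\ref{thm:DiffusionApprox}) with $A_n = c_n(P_n - I) = -c_n L_{rw}^{(n)}$, where $P_n = D_n^{-1}W_n$ is the random-walk transition matrix of the weighted kernel graph. It then suffices to verify (i) uniform convergence of the empirical drift $\hat\mu_n$ to the claimed $\mu_s$, (ii) the analogous convergence of $\hat\sigma_n\hat\sigma_n^T$ to $r_x(x)^2 I$, and (iii) the shrinking-jump condition. Condition (iii) is immediate: since $K_0$ has compact support and $r_x^{(n)}$ is uniformly bounded, $\vnorm{Y_1^{(n)}-x}$ is bounded by a constant times $h_n$, which vanishes with $h_n$.

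First I would replace each empirical average $(nh_n^m)^{-1}\sum_j K_n(x,x_j)\,g(x_j)$, for $g(y) \in \{1,\ y-x,\ (y-x)(y-x)^T\}$, by its manifold integral against $p\,dV$. Because $K_0$ has bounded variation and $w_x^{(n)},r_x^{(n)}$ converge uniformly, the induced function classes over $x$ have controlled bracketing complexity, so a Bernstein-type concentration argument yields uniform error of order $\sqrt{\log n/(nh_n^m)}$ over $x \in \M$. The assumption $nh_n^{m+2}/\log n \to \infty$ makes this $o(h_n^2)$ relative to the leading $O(h_n^m)$ degree, which is the precision we need since we rescale by $c_n \asymp h_n^{-2}$.

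Next I would expand each manifold integral in normal coordinates via the substitution $s = h_n u$, using $y - x = H_x s + L_x(ss^T) + O(\vnorm{s}^3)$, $\vnorm{y-x} = \vnorm{s}(1 + O(\vnorm{s}^2))$, $dV = (1 + O(\vnorm{s}^2))\,ds$, together with the Taylor expansions of $p$, $w_x^{(n)}$, and $r_x^{(n)}$. The degree normalization is $h_n^m w_x(x) p(x) \int K_0(\vnorm{u}/r_x(x))\,du$ at leading order. For the diffusion term, radial symmetry gives $\int u u^T K_0(\vnorm{u}/r_x(x))\,du \propto r_x(x)^{m+2}\, I$, so after dividing by the degree and multiplying by $c_n = Z_{K_0,m}/h_n^2$ (with $Z_{K_0,m}$ chosen to absorb the radial second-moment constant), $\hat\sigma_n\hat\sigma_n^T \to r_x(x)^2 I$. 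For the drift, the odd leading term $H_x s$ annihilates against the even base kernel; the surviving contributions come from coupling with the first-order corrections in $p(y)$, $w_x^{(n)}(y)$, and the kernel argument $\vnorm{u}/r_x^{(n)}(y)$ through $\dot r_x$, producing the three additive terms in $\mu_s(x)$. The extrinsic $L_x(ss^T)$ piece lies normal to the tangent plane and vanishes when the drift is read in normal coordinates. The factor $(m+2)$ attached to $\dot r_x/r_x$ arises through effective integration by parts against the radial behavior of $K_0$, using $\int \vnorm{u}^2 K_0(\vnorm{u})\,du = (m+2)\cdot\text{const}$ relative to the radial derivative moment of $K_0$.

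The principal obstacle is the combination of a merely bounded-variation $K_0$ with the non-differentiable piece $\alpha_x\,\sign(u_x^T s)\,u_x^T s$ of $r_x^{(n)}$, introduced specifically for undirected kNN graphs. Since $K_0$ need not be differentiable, a direct Taylor expansion of $K_0(\vnorm{u}/r_x^{(n)}(y))$ is not available; instead I would use the BV representation $K_0(t) = \int \Id(t \le \tau)\,d\nu(\tau)$ for a signed measure $\nu$, reducing the expansion to a family of ball-indicator integrals whose radii perturb linearly in $s$ plus a $|u_x^T s|$ correction. Radial integration then converts the $s$-dependence of these indicators into an effective first-order density correction, reproducing the $\dot r_x/r_x$ coefficient, while the $\alpha_x\,\sign(u_x^T s)\,u_x$ piece integrates to zero against the even components by antisymmetry in $u_x^T s$ and hence drops out of the drift. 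This anti-symmetry cancellation is precisely the technical mechanism that makes the undirected kNN graph produce the same limiting generator as its directed counterpart.
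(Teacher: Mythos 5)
Your deterministic part mirrors the paper's own argument: invoke the diffusion approximation theorem, decompose the bounded-variation kernel $K_0$ into a signed superposition of ball indicators, and expand the moments in normal coordinates, where the location-dependent bandwidth makes the indicator behave like a sphere re-centered at $h_n^2\dot r_x(x)$; your identification of the drift, diffusion, degree, and of the role of the $\sign(u_x^Ts)$ term is consistent with the paper's Lemmas on the shifted and perturbed sphere. The genuine gap is in the probabilistic step. The theorem allows $r_x^{(n)}$ and $w_x^{(n)}$ to be \emph{random} and data-dependent (this is precisely what is needed for kNN and self-tuning graphs), so the summands $K_n(x,x_j)g(x_j)$ are not independent, and a Bernstein or bracketing bound over a deterministic function class indexed by $x$ does not apply as you state it. The paper resolves this with a sandwiching argument: since $\epsilon_r^{(n)},\epsilon_w^{(n)}=O(h_n^2)$ uniformly, one can construct deterministic envelope functions of the form $\overline r_x^{(n)}(y)=r_x(x)+(\dot r_x(x)+\alpha_x\sign(u_x^Ts)u_x)^Ts\mp\kappa h_n^2$ and $\overline w_x^{(n)}=w_x\pm\kappa h_n^2$ that bound the random kernel eventually almost surely, apply Bernstein to the resulting i.i.d.\ sums, and then use the shifted-sphere moment lemma to show the envelope expectations differ only by $o(\kappa h_n^{m+2})$, so a squeeze gives uniform a.s.\ convergence. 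Your proposal needs some version of this step; uniform convergence of $r_x^{(n)},w_x^{(n)}$ alone does not substitute for it.

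There is also a rate-bookkeeping error that matters under the stated hypothesis. A generic uniform deviation of order $\sqrt{\log n/(nh_n^m)}$ for $(nh_n^m)^{-1}\sum_j K_n(x,x_j)g(x_j)$ equals $h_n\sqrt{\log n/(nh_n^{m+2})}$, which under $nh_n^{m+2}/\log n\to\infty$ is only $o(h_n)$, \emph{not} $o(h_n^2)$; with that bound alone you would apparently need $nh_n^{m+4}/\log n\to\infty$ (a condition the paper imposes only for the normalized-Laplacian corollary). The fix, which the paper uses explicitly, is that for the drift functional $g(y)=y-x$ the summands are bounded by $K_{max}h_n$ and have variance $O(h_n^{m+2})$ (and $O(h_n^{m+4})$ for the diffusion functional), so Bernstein at deviation scale $\epsilon h_n^{m+2}$ has exponent proportional to $nh_n^{m+2}$, and Borel--Cantelli then yields the required $o(h_n^2)$ accuracy exactly under $nh_n^{m+2}/\log n\to\infty$. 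The degree term only needs $o(1)$ accuracy, so your stated rate suffices there.
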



%
\begin{proof}
We apply the diffusion approximation theorem 
(Theorem \ref{thm:DiffusionApprox}) to obtain convergence of the 
random walk Laplacians.
Since $h_n \downarrow 0$, 
the probability of a jump of size $> \epsilon$ equals 0 eventually.
Thus, we simply need to 
show uniform convergence of the drift and diffusion terms
and identify their limits. 
We leave the detailed calculations in the appendix and present the main ideas
in the proof here.

We first assume that $K_0$ is an indicator kernel.
To generalize, we note that for kernels 
of bounded variation,
we may write
$K_0(x) = \int \Id(|x| < z) d\eta_{+}(z) - \int \Id(|x| < z) d\eta_{-}(z)$
for some finite positive measures $\eta_{-}, \eta_{+}$ with compact support. 
The result for general kernels then follows from Fubini's theorem.

We also initially assume that we are given the true density $p$.
After identifying the desired limits given the true density,
we show that the empirical version converges uniformly to the correct quantities.


The key calculation is lemma \ref{lemma:IdLemma} in the appendix which
establishes that
integrating against an indicator kernel is like integrating over a sphere
re-centered on $h_n^2 \dot{r}_x(x)$. 

Given this calculation
 and by Taylor expanding the non-kernel terms, one obtains the
infinitesimal first and second moments and the degree operator.
\begin{align*}
\begin{split}
M_1^{(n)}(x) 
&= \frac{1}{h_n^m} \int s K_n(x,y) p(y) ds \\
&= \frac{1}{h_n^m} \int s w_x^{(n)}(s) K_0\left(\frac{\vnorm{y-x}}{h_n r_x^{(n)}(s)}\right) p(s)ds\\
&= \frac{1}{h_n^m} \int s \left(w_x(x) + \nabla w_x(x)^T s + O(h_n^2)\right) 
\left(p(x) + \nabla p(x)^T s + O(h_n^2) \right) \times \\
& \quad 
\times K_0\left(\frac{\vnorm{y-x}}{h_n r_x^{(n)}(s)}\right)ds\\
&=  C_{K_0,m} h_n^{2} r_x(x)^{m+2} \left( w_x(x) \frac{\nabla p(x)}{m+2} 
 + p(x) \frac{\nabla w_x(x) }{m+2}
+ w_x(x) p(x) \dot{r}_x(x) + o(1) \right) 
\end{split}
\end{align*}
\begin{align*}
M_2^{(n)}(x) &= \frac{1}{h_n^m} \int ss^T K_n(x,y) p(y)ds \\
&= \frac{1}{h_n^m}  \int ss^T w_x^{(n)}(s) K_0\left(\frac{\vnorm{y-x}}{h_n r_x^{(n)}(s)}\right) p(s)ds\\
&= \frac{1}{h_n^m}  \int ss^T \left(w_x(x) + O(h_n)\right) \left(p(x) +  O(h_n) \right) 
 K_0\left(\frac{\vnorm{y-x}}{h_n r_x^{(n)}(s)}\right)ds\\
&= \frac{C_{K_0,m}}{m+2} h_n^{2} r_x(x)^{m+2} \left(w_x(x)p(x) I + O(h_n)\right), 
\end{align*}
\begin{align}
\label{eqn:degop}
d_n(x) &= \frac{1}{h_n^m} \int K_n(x,y) p(y) ds\\
&= \frac{1}{h^m} \int w_x^{(n)}(s) K_0\left(\frac{\vnorm{y-x}}{h_n r_x^{(n)}(s)}\right) p(s)ds\\
&= \frac{1}{h^m} \int \left(w_x(x) + O(h_n)\right) \left(p(x) +  O(h_n) \right)
K_0\left(\frac{\vnorm{y-x}}{h_n r_x^{(n)}(s)}\right) ds\\
&= C_{K_0,m}' r_x(x)^m \left(w_x(x)p(x) + O(h_n)\right)
\end{align}
where $C_{K_0,m} = \int u^{m+2} d\eta$, $C_{K_0,m}' = \int u^{m} d\eta$
and $\eta$ is the signed measure $\eta = \eta_+ - \eta_-$.

%
%
Let $Z_{K_0,m} = (m+2) \frac{C_{K_0,m}'}{C_{K_0,m}}$ and $c_n = Z_{K_0,m}/h_n^2$.
Since $K_n/d_n$ define Markov transition kernels, taking the limits 
$\displaystyle \mu_s(x) = \lim_{n\to \infty} c_n M_1^{(n)}(x)/ d_n(x)$ 
and $\displaystyle \sigma_s(x)\sigma_s(x)^T = \lim_{n\to \infty} c_n M_2^{(n)}(x)/ d_n(x)$
and applying the diffusion approximation theorem 
gives the stated result.

To more formally apply the diffusion approximation theorem we may calculate
the drift and diffusion in extrinsic coordinates. In extrinsic coordinates,
we have
\begin{align*}
&\mu(x) = r_x(x)^{2} H_x \Big( \frac{\nabla p(x)}{p(x)}
+ \frac{\nabla w_x(x)}{w_x(x)} +
(m+2)\frac{\dot{r}_x(x)}{r_x(x)} \Big)&
\\ &\qquad \qquad \qquad \qquad \qquad \qquad \qquad \qquad \quad
+ r_x(x)^2 L_x(I),&\\ 
&\sigma(x)\sigma(x)^T = r(x)^2\Pi_{T_x},&
\end{align*}
where $\Pi_{T_x}$ is the projection onto the tangent plane at $x$,
and $H_x$ and $L_x$ are 
the linear mappings between normal coordinates and extrinsic coordinates 
defined in Eqn~\eqref{eqn:embedCoord}.

 We now consider the convergence of the empirical quantities.
 For non-random $r_x^{(n)} = r_x, w_x^{(n)} = w_x$,
 the uniform and almost sure convergence of the empirical quantities to
 the true expectation follows from an application of Bernstein's inequality.
 In particular, 
 the value of 
 $F_n(x, S) = S_i K\left( \frac{\vnorm{Y-x}}{h_n r_x(Y)} \right)$
 is bounded by $K_{max} h_n$, 
 where $S$ is $Y$ in normal coordinates and $K_{max}$ depends on the kernel
 and the maximum curvature of the manifold.
 Furthermore, the second moment calculation for $M_2^{(n)}$ gives
 that the variance $\var( F_n(x, S) )$ is bounded by $c h_n^{m+2}$
 for some constant $c$ that depends on $K$ and the max of $p$, and 
 does not depend on $x$.
 By Bernstein's inequality
 and a union bound, we have
 \begin{align}
\nonumber  &Pr\left( \sup_{i \leq n} \left| \E_n \frac{1}{h_n^{m+2}} F_n(x_i,Y) - 
 \frac{1}{h_n^{2}} M_1^{(n)} \right| > \epsilon \right) \\
\nonumber &= Pr\left( \sup_{i \leq n} \left| \E_n F_n(x_i,Y) - 
 \E F_n(x_i, Y)\right| > \epsilon h_n^{m+2}\right) \\
 \label{eqn:bernstein bound}
 & < 2 n 
 \exp\left( -\frac{\epsilon^2 }{2c/(nh_n^{m+2}) + 2K_{max}\epsilon / (3nh_n^{m+1}) }\right). 
 \end{align}
 The uniform convergence a.s. of the first moment follows from Borel-Cantelli. 
 Similar inequalities are attained for the empirical second moment and degree terms.

Now assume $r_x^{(n)},w_x^{(n)}$ are random and define $F_n$ as before.
To handle the random weight and bandwidth function case,
we first choose deterministic weight and bandwidth functions to maximize
the first moment under a constraint that is satisfied eventually a.s..
Define
\begin{align*}
\overline{w}_x^{(n)}(y) &= w_x(y) + \kappa h_n^2 sign(s_i) \\ 
\overline{r}_x^{(n)}(y) &= r_x(x) + (\dot{r}_x(x) + \alpha_x \sign(u_x^Ts) u_x)^Ts  - \kappa h_n^2 sign(s_i) \\ 
\overline{F}_{n}(y) &= s_i \overline{w}_x^{(n)}(y) K_0 \left(\frac{\vnorm{y-x}}{h_n \overline{r}_x^{(n)}(y)} \right)
\end{align*}
for some constant $\kappa$ such that 
$\overline{r}_x^{(n)} < r_x^{(n)}$ and 
$\overline{w}_x^{(n)} > w_x^{(n)}$ eventually.
This is possible since the perturbation terms 
$\epsilon_r^{(n)}(x,s), \epsilon_w^{(n)}(x,s)= O(h_n^2)$.
Thus, we have $\overline{F}_{\kappa,n}(x,y) > F_n(x,y)$ for all 
$x,y \in \M$ eventually with probability 1.
Since $\overline{F}_{\kappa,n}(x,Y)$ uses deterministic weight and bandwidth functions, we obtain i.i.d. random variables and
may apply the Bernstein bound on $\overline{F}_{\kappa,n}(x,y)$ to 
obtain an upper bound on the empirical quantities, namely
$\E_n \overline{F}_{\kappa,n}(x,Y) > \E_n F_{n}(x,Y)$ for all $x \in \M$ 
eventually with probability 1.
We may similarly obtain a lower bound. 
By lemma \ref{lemma:shift sphere},
the difference between the expectation of the
upper bound and the  is
$\E \overline{F}_{\kappa,n}(x,Y) - \E \overline{F}_{0,n}(x,Y)= o(\kappa h_n^{m+2})$.
Applying the squeeze theorem
gives a.s. uniform convergence of the empirical first moment 
$M_1^{(n)}/h_n^{2}$. 
The degree and second moment terms are handled similarly.

Since $p,w_x,r_x$ are all assumed to be bounded away from $0$, 
the scaled degree operators $d_n$ are eventually bounded away from 0 
with probability 1, and the
continuous mapping theorem applied to $\frac{M_i^{(n)}/h_n^{2}}{d_n}$
gives a.s. uniform convergence of the drift and diffusion.

\end{proof}

\subsection{Unnormalized and Normalized Laplacians}
\label{sec:UnnormNormLap}
While our results are 
for the infinitesimal generator of a diffusion process, 
that is, 
for the limit of the random walk Laplacian $L_{rw} = I - D^{-1}W$,
it is easy to generalize them to 
the unnormalized Laplacian $L_{u} = D-W = D L_{rw}$
and symmetrically normalized Laplacian $L_{norm} = I - D^{-1/2}WD^{-1/2} = 
D^{1/2} L_{rw} D^{-1/2}$. 


\begin{corollary}
\label{corr:Unnorm and norm Lap}
Take the assumptions in Theorem \ref{thm:mainTheorem}, and
let $A$ be the limiting operator of the random walk Laplacian.
The degree terms 
$d_n(\cdot)$ converge uniformly a.s.
to a function $d(\cdot)$, and
\begin{align*}
-c_n' L_{u}^{(n)}f \to d \cdot A f \quad \mbox{a.s.}
\end{align*}
where $c_n' = c_n / h^m$.
Furthermore, under the additional assumptions
$nh_n^{m+4} / \log n \to \infty$,
$\sup_{x,y} |w^{(n)}_x - w_x| = o(h_n^2)$,
$\sup_{x,y} |r^{(n)}_x - r_x| = o(h_n^2)$,
and $d,w_x,r_x \in C^2(\M)$, 
we have
\begin{align*}
-c_n L_{norm}^{(n)}f \to d^{1/2} \cdot A (d^{-1/2}f) \quad \mbox{a.s.}
\end{align*}
\end{corollary}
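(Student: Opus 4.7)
My plan is to deduce both Laplacian convergences from Theorem~\ref{thm:mainTheorem} via the algebraic identities $L_u = D L_{rw}$ and $L_{norm} = D^{1/2} L_{rw} D^{-1/2}$, after first handling the uniform a.s.\ convergence of the degree function on its own. The degree statement is essentially already in the proof of Theorem~\ref{thm:mainTheorem}: the integral calculation at Eqn~\eqref{eqn:degop} identifies the limit $d(x) = C'_{K_0,m}\, r_x(x)^m w_x(x) p(x)$, and repeating the Bernstein-plus-union-bound argument from Eqn~\eqref{eqn:bernstein bound} on the degree sum (with variance of order $h_n^m$ rather than $h_n^{m+2}$), together with the same deterministic bracketing used there to handle random weight and bandwidth functions, gives $\sup_x |d_n(x) - d(x)| \to 0$ a.s.

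For the unnormalized Laplacian I would then invoke the entrywise identity $L_u f(x_i) = D_{ii} L_{rw} f(x_i)$, so that $-c_n' L_u f(x_i) = -c_n \cdot (D_{ii}/h_n^m) \cdot L_{rw} f(x_i) = -c_n \cdot d_n(x_i) \cdot L_{rw} f(x_i)$. Theorem~\ref{thm:mainTheorem} gives $-c_n L_{rw} f \to A f$ uniformly a.s., the previous step gives $d_n \to d$ uniformly a.s., and products of uniformly convergent bounded sequences converge uniformly to the product of their limits, yielding $-c_n' L_u f \to d \cdot A f$ a.s.

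For the normalized Laplacian use $L_{norm} f(x_i) = D_{ii}^{1/2} L_{rw}(D^{-1/2} f)(x_i)$. The outer factor $D^{1/2}$ converges uniformly a.s.\ to $d^{1/2}$, so the task reduces to $-c_n L_{rw}(d_n^{-1/2} f) \to A(d^{-1/2} f)$ uniformly a.s. The added hypothesis $d \in C^2(\M)$, together with $d$ bounded away from zero (a consequence of the standing positivity of $p, w_x, r_x$), ensures $d^{-1/2} f \in C^2(\M)$, and Theorem~\ref{thm:mainTheorem} applied directly gives $-c_n L_{rw}(d^{-1/2} f) \to A(d^{-1/2} f)$. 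What remains is to control the residual $c_n L_{rw}\bigl((d_n^{-1/2} - d^{-1/2}) f\bigr)$.

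The main obstacle is exactly this residual: because $c_n \sim h_n^{-2}$, any uniform pointwise error in the argument of $L_{rw}$ is amplified by $h_n^{-2}$ in the worst case, so everything hinges on forcing the error in $d_n^{-1/2}$ below $h_n^2$. I would split $d_n^{-1/2} - d^{-1/2}$ into a deterministic smoothing bias and a stochastic fluctuation. The strengthened hypotheses $\sup|w^{(n)} - w|, \sup|r^{(n)} - r| = o(h_n^2)$ combined with the $C^2$ smoothness of $d, w_x, r_x$ make the deterministic bias of the form $h_n^2 \phi_n$ with $\phi_n$ uniformly $C^2$-bounded, so $c_n L_{rw}(h_n^2 \phi_n f) = h_n^2 \cdot c_n L_{rw}(\phi_n f) \to 0$ uniformly a.s.\ by another application of Theorem~\ref{thm:mainTheorem}. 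The remaining hypothesis $n h_n^{m+4}/\log n \to \infty$ upgrades the Bernstein fluctuation rate from $O(\sqrt{\log n/(nh_n^m)}) = O(h_n)$ to $o(h_n^2)$; for the fluctuation piece the trivial bound $\vnorm{L_{rw} g}_\infty \leq 2 \vnorm{g}_\infty$ then gives $c_n \cdot o(h_n^2) = o(1)$ uniformly a.s. Multiplying by the uniformly convergent $D^{1/2} \to d^{1/2}$ completes the proof.
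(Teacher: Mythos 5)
Your treatment of the degree convergence, of the unnormalized Laplacian (product of uniformly convergent bounded quantities, which is the paper's continuous-mapping argument), and of the stochastic fluctuation of $d_n$ (Bernstein with $\epsilon \asymp h_n^2$, so that $nh_n^{m+4}/\log n \to \infty$ gives $\vnorm{d_n - \E d_n}_\infty = o(h_n^2)$ a.s.\ and the crude bound $\vnorm{L_{rw}^{(n)}g}_\infty \le 2\vnorm{g}_\infty$ kills that piece) all track the paper's proof. The gap is in your handling of the deterministic bias. You assert that $\E d_n - d = h_n^2\phi_n$ with $\phi_n$ \emph{uniformly $C^2$-bounded} and then dispose of it by ``another application of Theorem~\ref{thm:mainTheorem}.'' But the coefficient of $h_n^2$ in the bias expansion already contains second derivatives of $p$, $w_x$, $r_x$ (and curvature terms); under the corollary's hypotheses ($p,d,w_x,r_x \in C^2(\M)$) it is only continuous and uniformly bounded, not $C^2$ --- your step would implicitly require $C^4$-type smoothness that is not assumed. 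Without that smoothness you cannot Taylor-expand $\phi_n f$ inside $L_{rw}^{(n)}$, and your fallback estimate fails quantitatively: since $c_n h_n^2 = Z_{K_0,m}$ is a \emph{constant}, the trivial operator bound applied to $h_n^2\phi_n f$ only yields $c_n h_n^2\cdot O(1) = O(1)$, not $o(1)$. (A secondary issue: even if $\phi_n$ were smooth, Theorem~\ref{thm:mainTheorem} is stated for a fixed $f \in C^2(\M)$, so applying it to the $n$-dependent functions $\phi_n f$ needs either a uniform-over-class version or a boundedness statement extracted from its proof.)

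The missing idea --- which is exactly how the paper closes this step --- is to exploit the structure $c_n L_{rw}^{(n)} = c_n(I - P_n)$ with $P_n$ a Markov averaging operator over neighborhoods of radius $O(h_n)$: writing the bias as $h_n^2 b + o(h_n^2)$ with $b$ continuous and uniformly bounded (this is what the refined zeroth-moment calculation, Lemma~\ref{lemma:zeroth moment}, delivers), one gets $c_n L_{rw}^{(n)}(h_n^2 b) = Z_{K_0,m}(I - P_n)b$, and $\vnorm{(I-P_n)b}_\infty \to 0$ for any continuous $b$ simply because $P_n$ averages over shrinking neighborhoods; no differentiability of $b$ is needed. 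Replacing your ``$\phi_n$ uniformly $C^2$-bounded, apply the main theorem again'' step by this observation (or, alternatively, strengthening the smoothness hypotheses to make the bias coefficient genuinely $C^2$) repairs the argument; the rest of your proposal is sound and essentially coincides with the paper's proof.
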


\begin{proof}


For any two functions $\phi_1,\phi_2: \mcal \to \R$, 
define $g_u(\phi_1,\phi_2) = (\phi_1(\cdot), f_1(\cdot) \phi_2(\cdot))$.
We note that $g_u$ is a continuous mapping 
in the $L_{\infty}$ topology and 
\[
(d_n, c_n' L_{u}^{n}f) = g_u(d_n, c_n L_{rw}f).
\]
By the continuous mapping theorem,
if $d_n \to d$ a.s. and $c_n L_{rw}^{(n)}f \to Lf$ a.s. in the 
then 
\[
c_n' L_{u}^{(n)} \to d \cdot Lf.
\]
Thus, convergence of the random walk Laplacians implies convergence of the unnormalized
Laplacian under the very weak condition of convergence of the degree operator to a bounded function. 

Convergence of the normalized Laplacian is slightly trickier.
We may write the normalized Laplacian as 
\begin{align}
L_{norm}^{(n)} f &= d_n^{1/2} L_{rw}^{(n)} (d_n^{-1/2} f) \\
&= d_n^{1/2} L_{rw}^{(n)} (d^{-1/2}f) + d_n^{1/2} L_{rw}^{(n)} (d_n^{-1/2}-d^{-1/2}) f).
\end{align}
Using the continuous mapping theorem, 
we see that convergence of the normalized Laplacian,
$c_n L_{norm}^{(n)}f \to d^{-1/2} L_{rw} (d^{-1/2}f)$, is equivalent to showing
$c_n L_{rw}^{(n)} ((d_n^{-1/2}-d^{-1/2}) f) \to 0$.
A Taylor expansion of the inverse square root gives that showing
$c_n L_{rw}^{(n)} (d_n-d) \to 0$ is sufficient to prove convergence. 

We now verify conditions which will ensure that 
the degree operators will converge at the appropriate rate.
We further decompose the empirical degree operator into the
bias $\E d_n - d$ and empirical error $d_n - \E d_n$.

Simply carrying out the Taylor expansions to higher order terms in 
the calculation of the degree function $d_n$ in Eq.~\ref{eqn:degop},
and using the refined calculation of the zeroth moment in 
lemma~\ref{lemma:zeroth moment} in the appendix,
the bias of the degree operator is 
$d_n - d = h_n^2 b + o(h_n^2)$ for some uniformly bounded, continuous function $b$.


Thus we have,
\begin{align}
c_n L_{rw}^{(n)} (d_n-d) &= c_n h_n^2 \vnorm{(I - P_n) b}_{\infty} + o(1) = o(1) 
\end{align}
since $c_n h_n^2$ is constant and $\vnorm{(I - P_n) \phi}_{\infty} \to 0$ for any
continuous function $\phi$.

We also need to check that the empirical error
$\vnorm{d_n - \E d_n}_{\infty} = O(h_n^2)$ a.s..
If $nh_n^{m+4}/\log{n} \to \infty$ then 
using the Bernstein bound in equation \ref{eqn:bernstein bound}
with $\epsilon$ replaced by $h_n^2$ 
and applying Borel-Cantelli
gives the desired result.

\end{proof}

\subsection{Limit as weighted Laplace-Beltrami operator}
Under some regularity conditions, 
the 
limit given in the main theorem (Theorem \ref{thm:mainTheorem}) 
yields a
weighted Laplace-Beltrami operator.

For convenience, 
define $\gamma(x) = r_x(x)$, $\omega(x) = w_x(x)$.
\begin{corollary}
\label{cor:differentiable_r}
Assume the conditions of Theorem \ref{thm:mainTheorem} and
let $q = p^2 \omega \gamma^{m+2}$.
If $r_x(y) = r_y(x),w_x(y) = w_y(x)$ for all $x,y \in \M$ 
and $r_{(\cdot)}(\cdot),w_{(\cdot)}(\cdot)$ 
are twice differentiable in a neighborhood of $(x,x)$
for all $x$, then for $c_n' = Z_{K_0,m}/h^{m+2}$
\begin{align}
-c_n' L_{u}^{(n)} \to \frac{q}{p} \Delta_q.
\end{align}
\end{corollary}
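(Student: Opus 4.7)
The plan is to start from Corollary~\ref{corr:Unnorm and norm Lap}, which already gives $-c_n' L_{u}^{(n)} f \to d \cdot Af$ with $d(x) = C_{K_0,m}' \gamma(x)^{m} \omega(x) p(x)$ the limit of the scaled degree and $A$ the diffusion generator from Theorem~\ref{thm:mainTheorem}. The entire task then reduces to an algebraic identity: under the symmetry and twice-differentiability hypotheses, $dA$ can be recast as $\frac{q}{p}\Delta_{q}$ for $q = p^{2}\omega\gamma^{m+2}$, up to constants absorbed in $c_n'$.

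The first substantive step is to convert the one-sided partial derivatives $\dot{r}_x(x)$ and $\nabla w_x(x)$ appearing in the drift $\mu_s$ into total derivatives of $\gamma(\cdot)$ and $\omega(\cdot)$. Writing $R(x,y) := r_x(y)$ and using the symmetry $R(x,y) = R(y,x)$, the chain rule applied to $\gamma(x) = R(x,x)$ yields
\begin{align*}
\nabla \gamma(x) = \partial_1 R(x,x) + \partial_2 R(x,x) = 2\, \partial_2 R(x,x) = 2\,\dot{r}_x(x),
\end{align*}
and analogously $\nabla \omega(x) = 2\,\nabla w_x(x)$; here twice-differentiability of $R$ and $W$ at the diagonal is used to justify the chain rule. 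Substituting into $\mu_s$ from Theorem~\ref{thm:mainTheorem} gives
\begin{align*}
\mu_s(x) = \gamma(x)^{2}\!\left(\frac{\nabla p}{p} + \frac{1}{2}\frac{\nabla \omega}{\omega} + \frac{m+2}{2}\frac{\nabla \gamma}{\gamma}\right) = \frac{\gamma(x)^{2}}{2}\,\frac{\nabla q}{q},
\end{align*}
since $\log q = 2\log p + \log\omega + (m+2)\log\gamma$.

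The second step is to recognize the resulting divergence structure. Combining this drift with $\sigma_s\sigma_s^T = \gamma^{2} I$ collapses the generator into
\begin{align*}
A f = \frac{\gamma^{2}}{2}\!\left(\Delta_{\M} + \frac{\nabla q^{T}}{q}\nabla\right) f = \frac{\gamma^{2}}{2q}\operatorname{div}(q\,\nabla f),
\end{align*}
which is, up to the scalar $\gamma^{2}/2$, the self-adjoint weighted Laplace-Beltrami operator on $L^{2}(q\,dV)$. Multiplying through by $d = C'_{K_0,m}\gamma^{m}\omega p$, using $d\gamma^{2}/q = C'_{K_0,m}/p$, and absorbing the leading constant into the definition of $c_n'$ (which already carries the normalization $Z_{K_0,m}$), we arrive at $dA \propto \frac{q}{p}\Delta_{q}$, which is the claimed limit.

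The main obstacle is the first step: one must be careful to interpret $\dot{r}_x$ and $\nabla w_x$ consistently with the Taylor-like expansion in Eqn.~\eqref{eqn:bw func} (a derivative only in the second argument) and then justify the chain-rule identity $\nabla\gamma = 2\dot{r}_x$, which requires the symmetry hypothesis together with twice-differentiability of $r_{(\cdot)}(\cdot)$ in a neighborhood of the diagonal; without this smoothness (as for the undirected kNN case in Section~\ref{sec:undirected knn}), the non-differentiable part governed by $\alpha_x \operatorname{sign}(u_x^{T}s)u_x$ would survive and obstruct the divergence-form simplification. Once the drift is rewritten in divergence form, the rest is purely algebraic bookkeeping.
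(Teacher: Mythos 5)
Your proposal is correct and follows essentially the same route as the paper's own (much terser) proof: the key identity $\restrict{\nabla}_{y=x}\gamma(y) = 2\,\restrict{\nabla}_{y=x} r_x(y)$ obtained from the symmetry and twice-differentiability hypotheses, followed by substitution into the drift/diffusion limits of Theorem~\ref{thm:mainTheorem}, multiplication by the limiting degree via Corollary~\ref{corr:Unnorm and norm Lap}, and the definition of $\Delta_q$. You simply spell out the algebraic bookkeeping (the divergence-form rewriting and absorption of constants) that the paper leaves implicit, sharing the paper's own looseness about universal constants and factors of $2$.
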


\begin{proof}
Note that $\restrict{\nabla}_{y=x} \gamma(y) = 2 \restrict{\nabla}_{y=x} r_x(y)$.
The result follows from application of Theorem \ref{thm:mainTheorem},
Corrollary \ref{corr:Unnorm and norm Lap},
and the definition of the weighted Laplace-Beltrami operator.
\end{proof}



\section{Application to Specific Graph Constructions}

To illustrate Theorem~\ref{thm:mainTheorem}, 
we apply it to calculate the asymptotic limits of 
graph Laplacians for several widely used graph construction methods. 
We also apply the general
diffusion theory framework to analyze LLE.

\subsection{$r$-Neighborhood and Kernel Graphs}

In the case of the $r$-neighborhood graph,
the Laplacian is constructed using a kernel with fixed bandwidth 
and normalization. The base kernel is simply the indicator 
function $K_0(x) = I(|x| < r)$. 
The radius $r_x(y)$ is constant so $\dot{r}(x) = 0$. 
The drift is given by $\mu_s(x) = \nabla p(x)/p(x)$ and
the diffusion term is $\sigma_s(x)\sigma_s(x)^T = I$.
The limit operator is thus
\[\frac{1}{2}\Delta_{\M} + \frac{\nabla p(x)^T}{p(x)} \nabla 
= \frac{1}{2} \Delta_{2}
\] 
as expected. This analysis also holds for
arbitrary kernels of bounded variation. One
may also introduce the usual weight function 
$w_x^{(n)}(y) = d_n(x)^{-\alpha}d_n(y)^{-\alpha}$
to obtain limits of the form $\frac{1}{2} \Delta_{p^{2-2\alpha)}}$.
These limits match those obtained 
by \cite{HeinGraphNormalizations} and \cite{LafonThesis} 
for smooth kernels.

\subsection{Directed k-Nearest Neighbor Graph}

For kNN-graphs, the base kernel is still the 
indicator kernel, and
the weight function is constant $1$.
However, 
the bandwidth function $r^{(n)}_x(y)$ is random 
and depends on $x$.
Since the graph is directed, it 
does not depend on $y$ so $\dot{r}_x = 0$.

%
By the analysis in section \ref{sec:knn convergence},
$r_x(x) = c p^{-1/m}(x)$ for some constant $c$.
Consequently the limit operator is proportional to
\[
\frac{1}{p^{2/m}}(x) \left( 
\Delta_{\M} + 2  \frac{\nabla p^T}{p} \nabla \right) 
= \frac{1}{p^{2/m}} \Delta_{p^2}.
\]
Note that this is generally {\em not} a self-adjoint operator in $L(p)$. 
The symmetrization of the graph has a non-trivial affect to make the 
graph Laplacian self-adjoint.

\subsection{Undirected $k$-Nearest Neighbor Graph}
\label{sec:undirected knn}

We consider the OR-construction where the nodes $v_i$ and $v_j$ are linked 
if $v_i$ is a $k$-nearest neighbor of $v_j$ {\em or} vice-versa. 
In this case 
$h_n^m r^{(n)}_x(y) = \max\{\rho_n(x), \rho_n(y)\}$ 
where $\rho_n(x)$ is the distance to the $k_n^{th}$ nearest neighbor of $x$.
The limit bandwith function is non-differentiable,
$r_x(y) = \max\{p^{-1/m}(x), p^{-1/m}(y)\}$, but a Taylor-like expansion
exists with
$\dot{r}_x(x) = \frac{1}{2m}\frac{\nabla p(x)^T }{p(x)}$.
The limit operator is 
\[
\frac{1}{p^{2/m}}\Delta_{p^{1-2/m}}.
\]
which is self-adjoint in $L_2(p)$.
Surprisingly, if $m=1$ then the kNN graph construction
induces a drift {\em away} from high densiy regions.

\subsection{Conditions for kNN convergence}
\label{sec:knn convergence}

To complete the analysis, we must check the conditions 
for kNN graph constructions to
satisfy the assumptions of the main theorem. 
This is a straightforward application of existing uniform consistency
results for kNN density estimation.


Let $h_n = \left( \frac{k_n}{n} \right)^{1/m}$.
The condition we must verify is 
\begin{align*}
\sup_{y \in \M} \vnorm{r_x^{(n)} - r_x}_{\infty} = O(h_n^2) \mbox{ a.s.}
\end{align*}

We check this for the directed kNN graph, but 
analyses for other kNN graphs are similar.
The kNN density estimate of \cite{loftsgaarden65knn} is 
\begin{align}
\label{eqn:knn density}
\hat{p}_n(x) = \frac{V_m}{n (h_n r_x^{(n)}(x))^m} 
\end{align}
where $h_n r_x^{(n)}(x)$ 
is the distance to the $k^{th}$ nearest neighbor of $x$
given $n$ data points. 
Taylor expanding 
equation \ref{eqn:knn density} shows that if
$ \vnorm{\hat{p}_n - p}_{\infty} = O(h_n^2)$ a.s. then 
 the requirement on the location
dependent bandwidth for the main theorem is satisfied.

\citet{devroye77knn}'s proof for the uniform 
consistency of kNN density estimation
may be easily modified to show this. Take $\epsilon = (k_n/n)^2$
in their proof. One then sees that $h_n = k_n/n \to 0$ and
$\frac{nh_n^{m+2}}{\log{n}} = \frac{k_n^{2+2/m}}{n^{1+2/m} \log{n}} \to \infty$
are sufficient to achieve the desired bound on the error.


\subsection{``Self-Tuning'' Graphs}

The form of the kernel used in self-tuning graphs is
\[
K_n(x,y) = \exp\left( \frac{-\vnorm{x-y}^2}{\sigma_n(x) \sigma_n(y)} \right).
\]
where $\sigma_n(x) = \rho_n(x)$, the distance between
$x$ and the $k^{th}$ nearest neighbor.
The limit bandwidth  
function is $r_x(y) = \sqrt{p^{-1/m}(x)p^{-1/m}(y)}$.
Since this is twice differentiable, corollary \ref{cor:differentiable_r}
gives the asymptotic limit, which is the same as for undirected kNN graphs,
\[
p^{-2/m} \Delta_{p^{1-2/m}}.
\]

\subsection{Locally Linear Embedding}
Locally linear embedding (LLE), introduced by \citet{roweis00LLE},
has been noted to behave like (the square of) the Laplace-Beltrami 
operator \cite{BelkinLapEigenmaps}. 

Using our kernel-free framework we will show how LLE differs 
from weighted Laplace-Beltrami operators and graph Laplacians
in several ways.
1) LLE has, in general,
\emph{no well-defined asymptotic limit} without additional 
conditions on the weights. 
2) It can only behave like an {\em unweighted} Laplace-Beltrami operator.
3) It is affected by the curvature of the manifold,
and the curvature can cause LLE to 
not behave like any elliptic operator 
(including the Laplace-Beltrami operator).


The key observation is that LLE only controls for the 
drift term in the extrinsic coordinates. Thus, the diffusion term
has freedom to vary. However, if the manifold has curvature, the drift
in extrinsic coordinates constrains 
the diffusion term in normal coordinates.

The LLE matrix is defined as $(I-W)^T(I-W)$
where $W$ is a weight matrix which minimizes reconstruction error
$W = \argmin_{W'} \vnorm{(I-W')y}^2$
under the constraints $W'1 = 1$ and $W_{ij}' \neq 0$ only if $j$
is one of the $k^{th}$ nearest neighbors of $i$. Typically $k > m$
and reconstruction error $ = 0$. We will analyze the matrix
$M = I-W$.

Suppose LLE produces a sequence of matrices $M_n = I - W_n$. The row sums of 
$M_n$ are $0$. Thus, we may decompose $M_n = A_n^+ - A_n^-$
where $A_n^+,A_n^{-}$ are  generators for 
finite state Markov processes
obtained from the positive and negative weights respectively.
Assume that there is some scaling $c_n$ such that 
$c_n A_n^{+},c_n A_n^{-}$ converge to generators
of diffusion processes with drifts $\mu_+, \mu_-$ and diffusion terms
$\sigma_+\sigma_+^T,\sigma_-\sigma_-^T$. 
Set $\mu = \mu_+ - \mu_-$ and $\sigma\sigma^T = \sigma_+\sigma_+ - \sigma_-\sigma_-$.

\stitle{No well-defined limit.}
We first show there is generally no well-defined asymptotic limit
when one simply minimizes reconstruction error.
Suppose $rank(L_x) < m(m+1)/2 $ at $x$. This will necessarily be true
if the extrinsic dimension $b < m(m+1)/2 + m$.
For simplicity assume $rank(L_x)= 0$.
Minimizing the LLE reconstruction error does not constrain
the diffusion term, and $\sigma(x)\sigma(x)^T$ may be chosen arbitrarily.
Choose asymptotic diffusion $\sigma\sigma^T$ and drift $\mu$ terms that are 
Lipschitz so that a corresponding diffusion process necessarily exists. 
A diffusion with terms $2\sigma\sigma^T$ and $\mu$ will also 
exist in that case.

One may easily construct graphs for the positive and negative weights
with these asymptotic diffusion and drift terms 
by solving highly underdetermined quadratic programs.
Furthermore, in the interior of the manifold,
these graphs may be constructed so that the finite sample
drift terms are exactly equal by adding an additional constraint.
Thus, $A_n^+ \to 2 G_0 + \mu^T \nabla$
and $A_n^- \to G_0 + \mu^T \nabla$ where $G_0$ is the generator 
for a diffusion process with zero drift and diffusion term
$\sigma_-(x)\sigma_-(x)^T$.
We have $c_n M_n =  A_{n}^+ - A_{n}^- \to G_0$.
Thus, we can construct a sequence of LLE matrices that have 0 reconstruction
error but have an arbitrary limit. It is trivial to see how to modify the
construction when $0 < rank(L_x) < m(m+1)/2$.

\stitle{No drift.} 
Since $\mu_s(x) = 0$,
if the LLE matrix does behave like a Laplace-Beltrami
operator, it must behave like an unweighted one, and the density 
has no affect on the drift.

\stitle{Curvature and limit.}
We now show that the curvature of the manifold affects LLE
and that the LLE matrix may not behave like any elliptic operator.
If the manifold has sufficient curvature, namely
if the extrinsic coordinates have dimension
$b \geq m + m(m+1)/2$ and $rank(L_x) = m(m+1)/2$, then 
the diffusion term 
in the normal coordinates is fully constrained by the drift term
in the extrinsic coordinates. 

Recall from equation \ref{eqn:embedCoord} that the 
extrinsic coordinates as a function of the normal coordinates are
$y = x + H_xs + L_x(ss^T) + O(\vnorm{s}^3)$. By linearity of $H_x$ 
and $L_x$, the asymptotic drift in the extrinsic coordinates
is $\mu(x) = H_x\mu_s(x) + L_x(\sigma_s(x)\sigma_s(x)^T)$.

Since reconstruction error in the extrinsic
coordinates is 0, we have in normal coordinates 
\begin{align*}
\mu_s(x) = 0 \quad \mbox{ and } \quad L_x(\sigma_s(x)\sigma_s(x)^T) = 0.
\end{align*}
In other words, the asymptotic 
drift and diffusion terms of $A_{n}^+$ and $A_{n}^-$ must be the same,
and $c_n M_n \to G_0 - G_0 = 0$.
 
This implies that the scaling $c_n$ where LLE can be expected 
to behave like an elliptic operator gives the trivial limit 0.
If another scaling yields a non-trivial limit, it may include 
higher-order differential terms. It is easy to see 
 when $L_x$ is not full rank, the curvature affects LLE
by partially constraining the diffusion term.


\stitle{Regularization and LLE.} 
We note that while the LLE framework of minimizing reconstruction
error can yield ill-behaved solutions, practical implementations
add a regularization term when constructing the weights. 
This causes the reconstruction error to be non-zero in general
and gives unique solutions for the weights which favor equal weights 
(and asymptotic behavior like kNN graphs). 

\section{Experiments}

\begin{figure*}[t]
\begin{tabular}{cc}
    \includegraphics[width=2.0in]{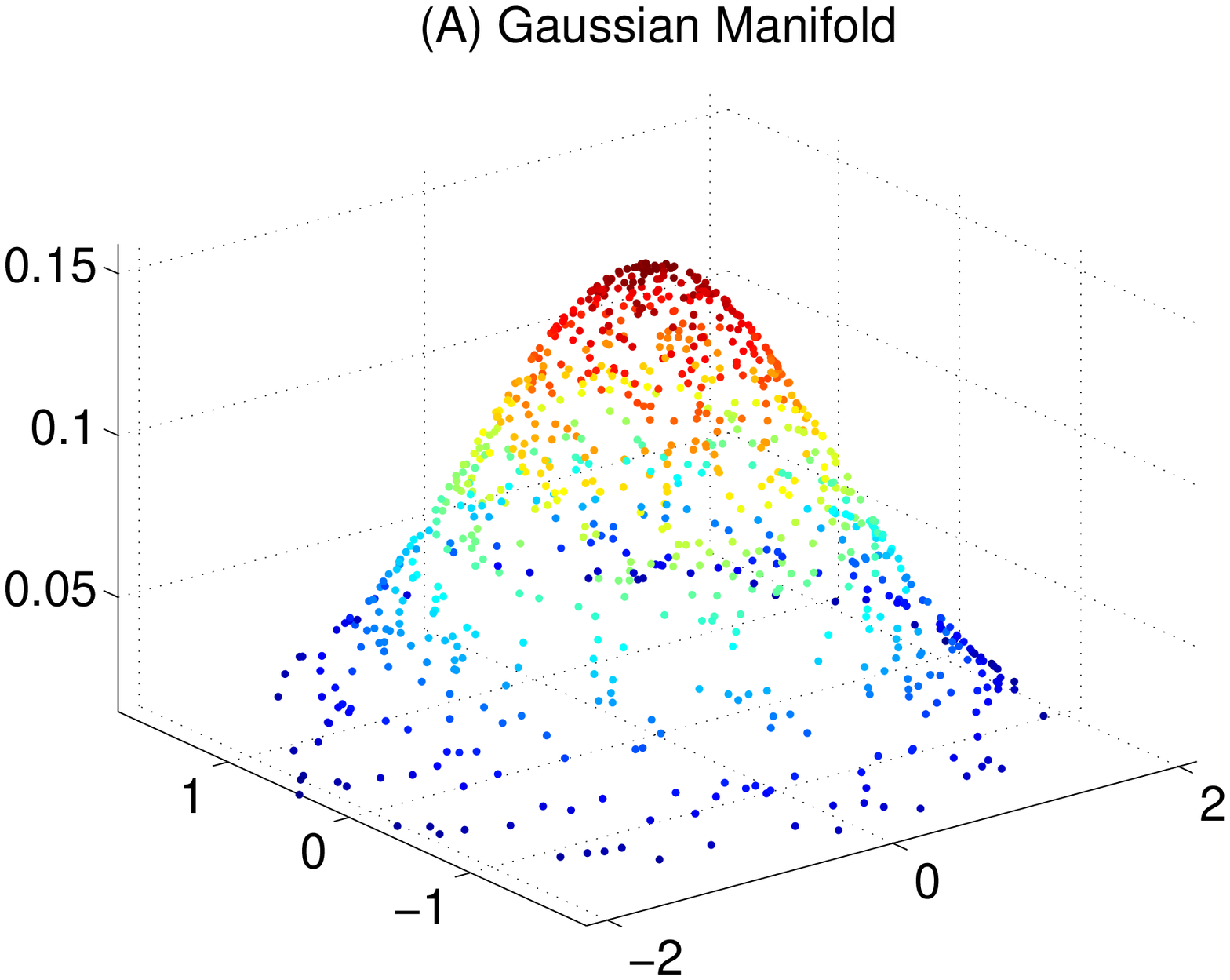}
&
    \includegraphics[width=2.0in]{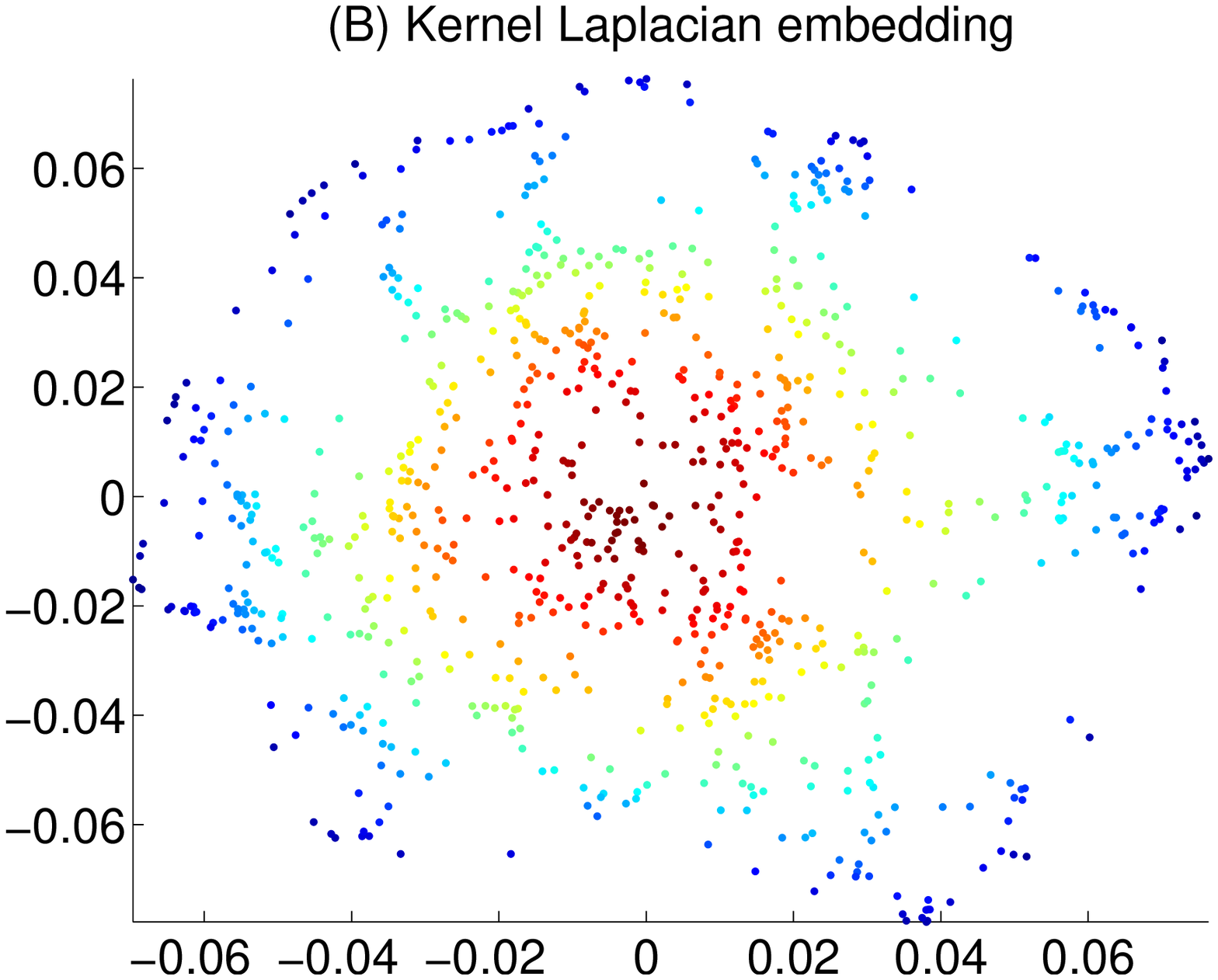} \\

    \includegraphics[width=2.0in]{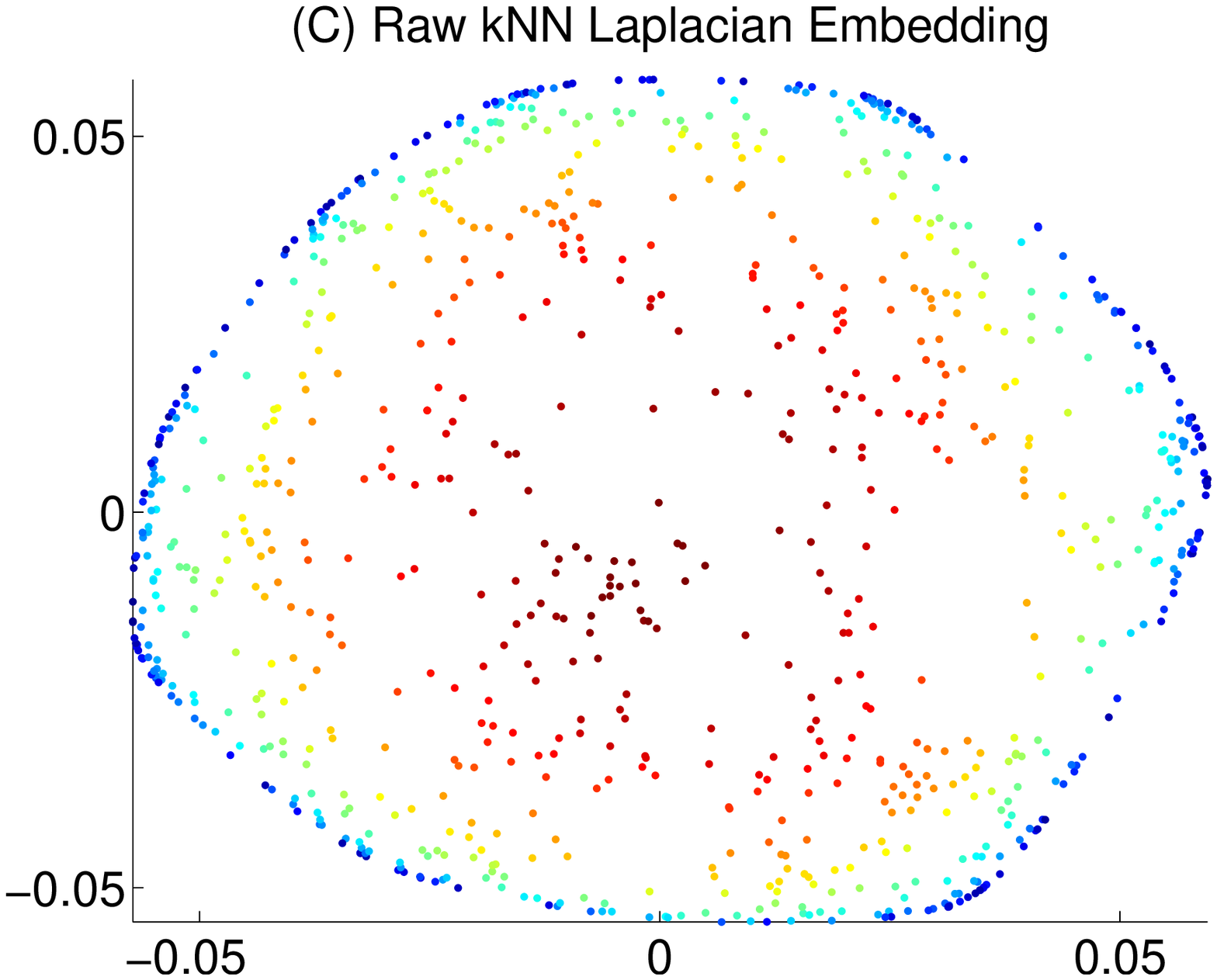}
&
    \includegraphics[width=2.0in]{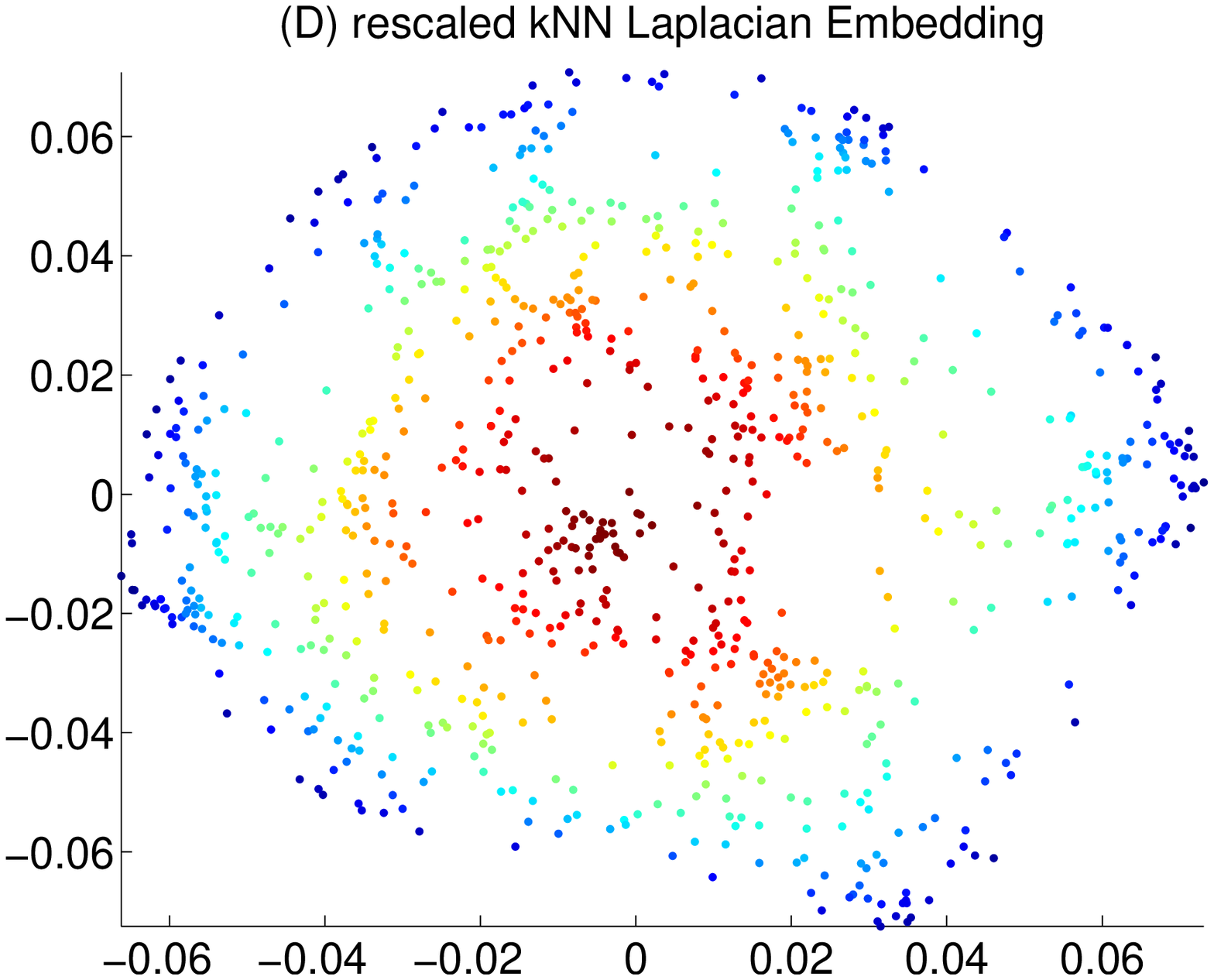}

\end{tabular}
\caption{(A) shows a 2D manifold where the $x$ and $y$ coordinates 
are drawn from a truncated standard normal distribution. 
(B-D) show embeddings using different graph constructions. 
(B) uses a normalized Gaussian kernel $\frac{K(x,y)}{d(x)^{1/2}d(y)^{1/2}}$, 
(C) uses a kNN graph, and
(D) uses a kNN graph with edge weights $\sqrt{\hat{p}(x)\hat{p}(y)}$.
The bandwidth for (B) was chosen to be the median standard deviation
from taking 1 step in the kNN graph.
 }
\label{fig:GaussMani}
\end{figure*}

To illustrate the theory, we show how to correct 
the bad behavior of the kNN Laplacian for a synthetic data set.
We also show how our analysis can predict the surprising
behavior of LLE.

\stitle{kNN Laplacian.}
We consider a non-linear embedding example which 
almost all non-linear embedding techniques handle well but
the kNN graph Laplacian performs poorly. Figure \ref{fig:GaussMani}
shows a 2D manifold embedded in 3 dimensions and  
embeddings using different graph constructions.
The theoretical limit of the 
normalized Laplacian $L_{knn}$
for a kNN graph is
$
L_{knn} = \frac{1}{p} \Delta_1.
$
while the limit for a graph with Gaussian weights
is $L_{gauss} =  \Delta_{p}$.
The first 2 coordinates of each point are 
from a truncated standard normal distribution, so the density at 
the boundary is small and the effect of the $1/p$ term is substantial. 
This yields the bad behavior shown in 
Figure \ref{fig:GaussMani} (C). 
We may use the relationship
between the $k^{th}$-nearest neighbor and the density in 
Eqn~\eqref{eqn:knn density} to obtain a pilot estimate $\hat{p}$
of the density. Choosing $w_x(y) = \sqrt{\hat{p}_n(x)\hat{p}_n(y)}$,
gives a weighted kNN graph with the same limit as the graph with 
Gaussian weights.
Figure \ref{fig:GaussMani} (D)
shows that this change yields the roughly desired behavior but with 
fewer ``holes'' in low density regions and more in high density regions.


\begin{figure*}[t]
\begin{tabular}{cc}

    \includegraphics[width=2.0in]{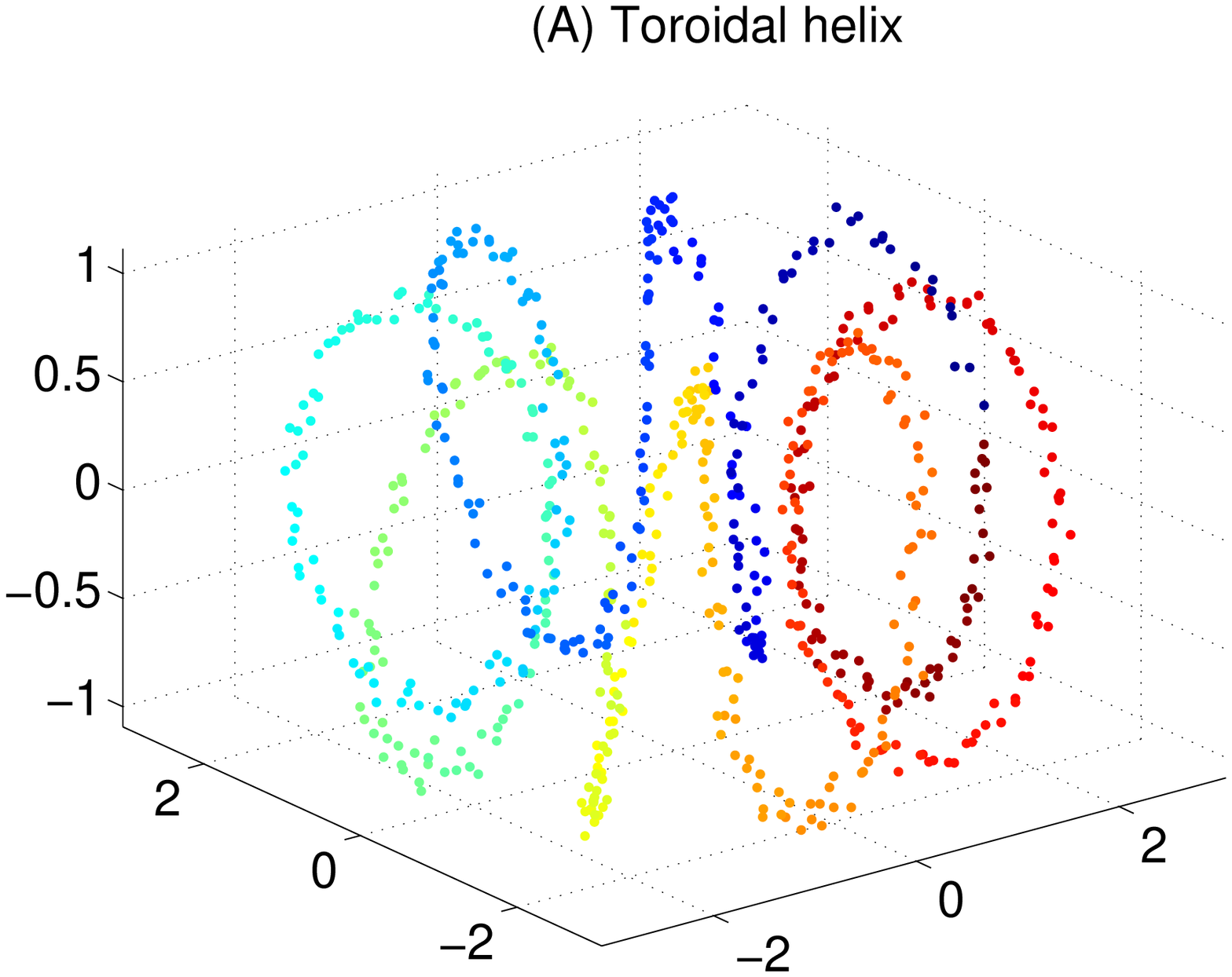}
&
    \includegraphics[width=2.0in]{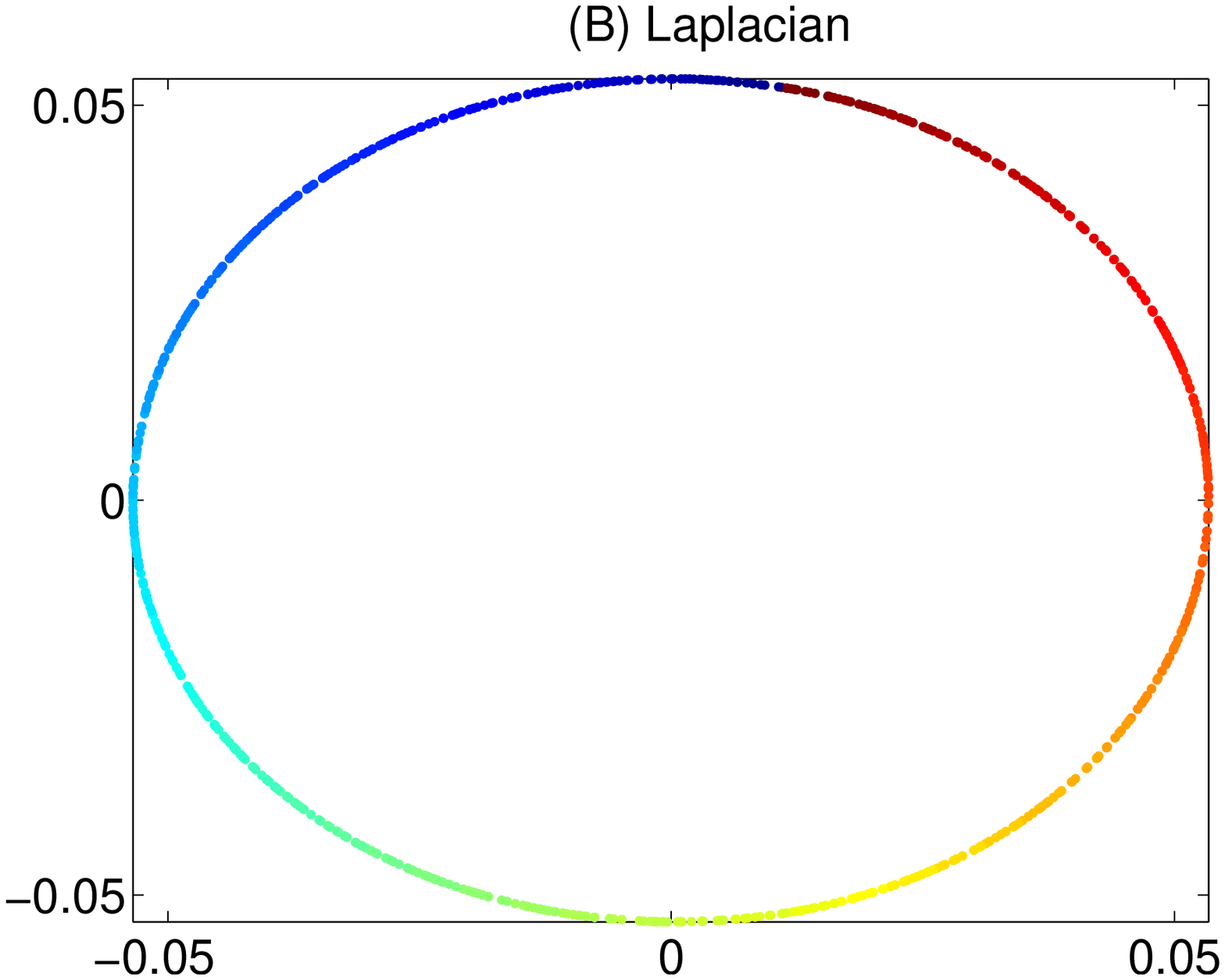} \\

    \includegraphics[width=2.0in]{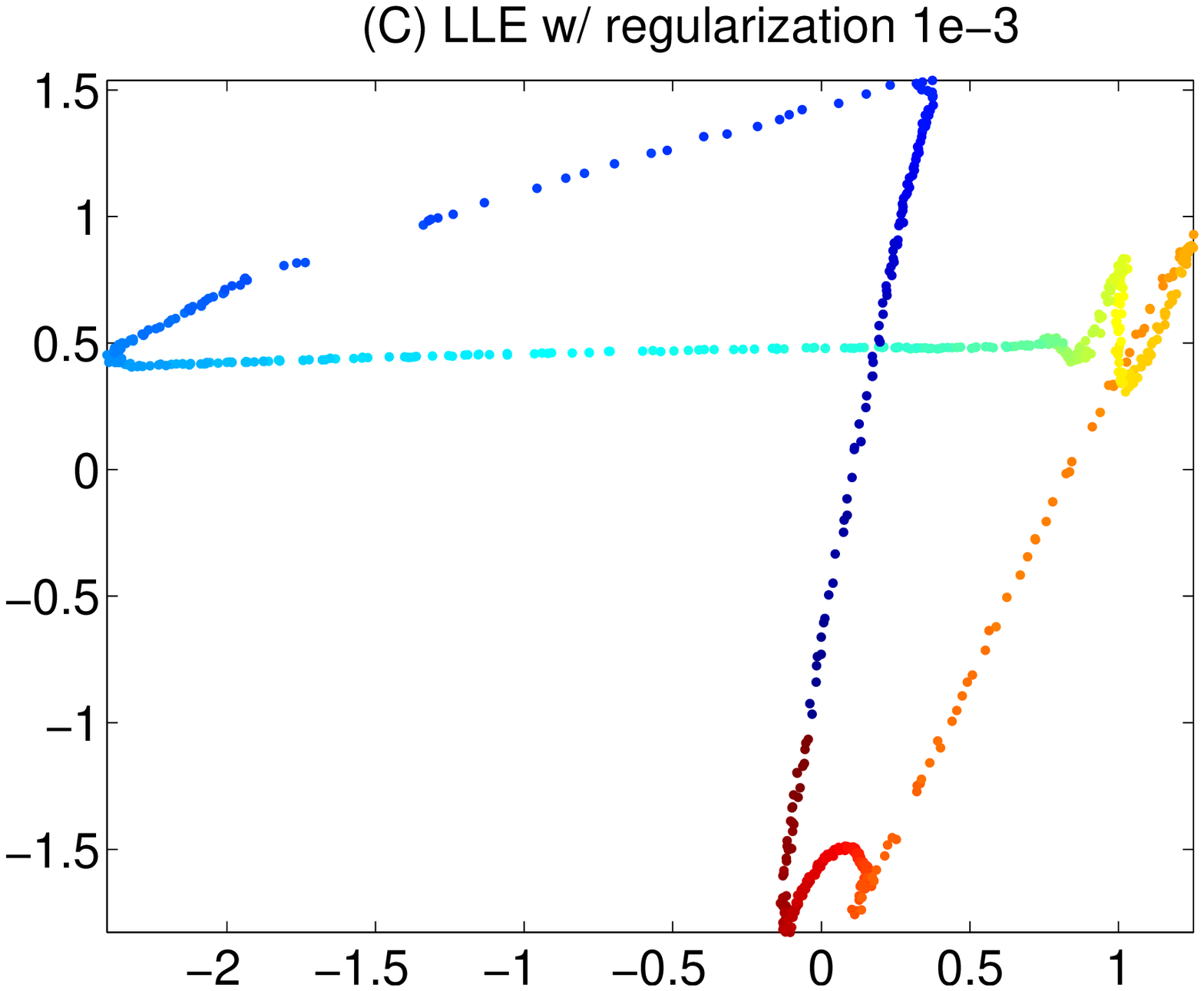}
&
    \includegraphics[width=2.0in]{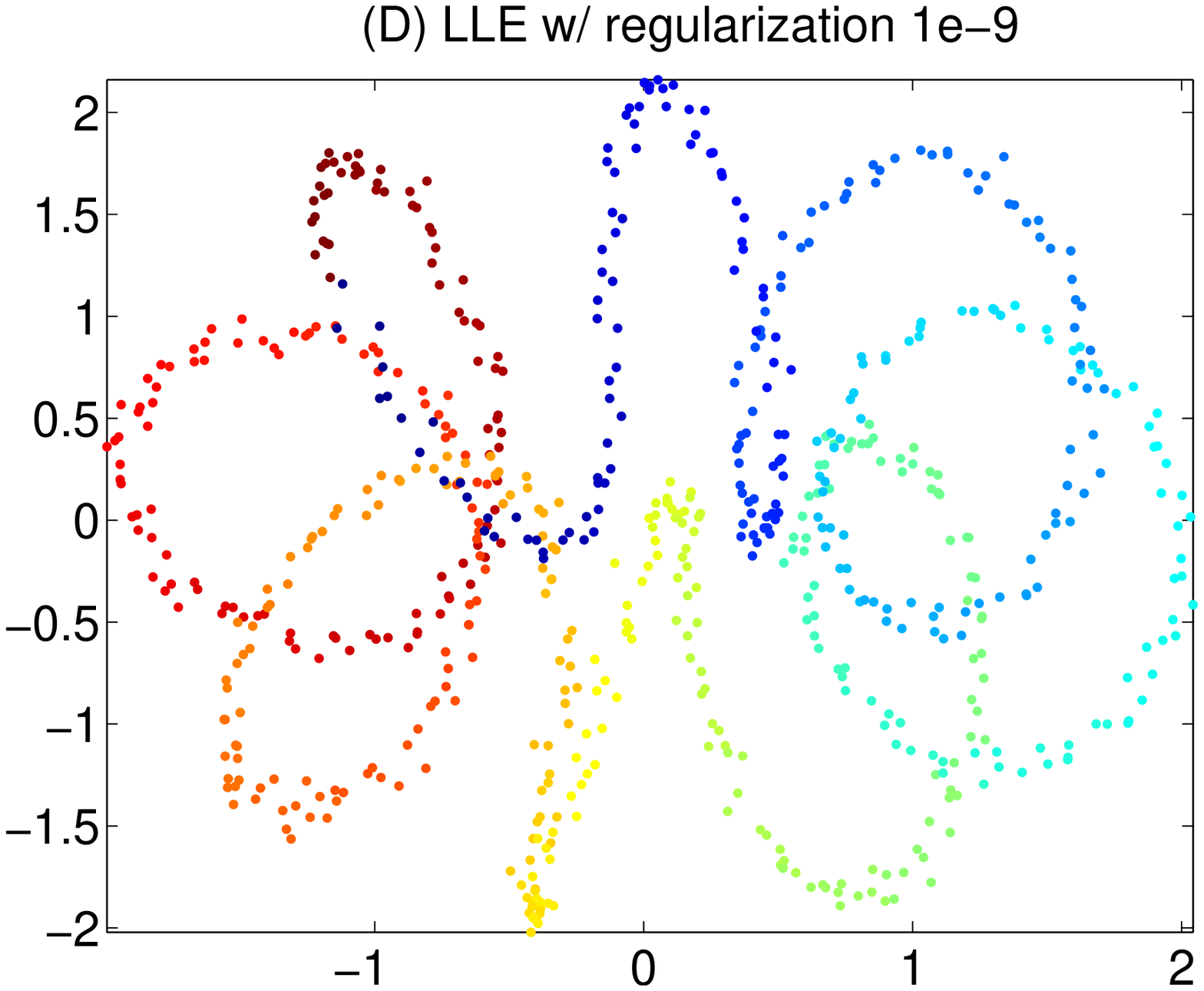}
\end{tabular}
\caption{(A) shows a 1D manifold isometric to a circle.   
(B-D) show the embeddings using  
(B) Laplacian eigenmaps which correctly identifies the structure,
(C) LLE with regularization 1e-3, and 
(D) LLE with regularization 1e-6.
 }
\label{fig:helix}
\end{figure*}

\stitle{LLE.}
We consider another synthetic data set, 
the toroidal helix,  in which the manifold structure
is easy to recover. 
Figure \ref{fig:helix} (A) shows the manifold which is clearly
isometric to a circle, a fact picked up by the kNN Laplacian in
Figure \ref{fig:helix} (B). 

Our theory predicts that the heuristic argument that 
LLE behaves like the Laplace-Beltrami operator will {\em not} hold.
Since the total dimension for the drift and diffusion terms is $2$
and the global coordinates also have dimension 2, 
that there is forced cancellation of the 
first and second order differential terms and the operator should behave
like the 0 operator or include higher order differentials. In
Figure \ref{fig:helix} (C) and (D), we see this that LLE performs poorly
and that the behavior comes closer to the 0 operator
 when the regularization term is smaller.


\section{Remarks and Discussion}

\subsection{Non-shrinking neighborhoods}
In this paper, we have presented convergence results using results for 
diffusion processes without jumps.
Graphs constructed using a fixed, non-shrinking bandwidth
do not fit within this framework, but
approximation theorems for diffusion processes with jumps 
still apply (see \cite{JacodLimitThms}). Instead of being characterized by the 
drift and diffusion pair
$\mu(x), \sigma(x)\sigma(x)^T$, the infinitesimal generators
for a diffusion process with jumps is characterized by the 
``L\^{e}vy-Khintchine'' triplet consisting of the drift, diffusion,
and ``L\^{e}vy measure.''  Given a sequence of transition kernels $K_n$,
the additional requirement for convergence of the limiting process is
the existence of a limiting transition kernel $K$ such that 
$\int K_n(\cdot,dy) g(y) dy \to \int K(\cdot,dy) g(y) dy$ locally uniformly
for all $C^1$ functions $g$. 
This establishes an impossibility result,
that no method that only assigns positive mass on 
shrinking neighborhoods can have the same graph Laplacian limit as a 
a kernel construction method where the bandwidth is fixed.

\subsection{Convergence rates}
We note that one missing element in our analysis is the derivation of 
convergence rates. For the main theorem, we note 
that it is, in fact, not necessary to apply a diffusion approximation 
theorem.
Since our theorem still uses a kernel (albeit one with much weaker
conditions), a virtually identical proof can be obtained by 
applying a function $f$ and Taylor expanding it.
Thus, we believe that similar 
convergence rates to \cite{HeinGraphNormalizations}
can be obtained. Also, while our convergence result is stated
for the strong operator topology, 
the same conditions as in Hein give weak convergence.

\subsection{Relation to density estimation}
The connection between kernel density estimation and 
graph Laplacians is obvious, namely, 
any kernel density estimation method using a non-negative kernel 
induces a random walk graph Laplacian and vice versa. 

In this paper, we have shown that
as a consequence of identifying the asymptotic degree term,
we have shown consistency of a wide class of adaptive 
kernel density estimates on a manifold.
We also have shown that on compact sets, the the bias 
term is uniformly 
bounded by a term of order $h^2$, and 
a small modification to the Bernstein bound (Eqn~\ref{eqn:bernstein bound}) 
gives that
the variance is bounded by a term of order $h^{-m}$. Both of which one 
would expect. 
This generalizes previous work on manifold density estimation
by \cite{Pelletier2005297} and \cite{SubmanifoldKDE}  to 
adaptive kernel density estimation.

The well-studied field of kernel density estimation
may also lead to insights on how to choose a good
location dependent bandwidth as well.
We compare the form of our density estimates to other
well-known adaptive kernel density estimation techniques.
The balloon estimator and sample smoothing estimators
as described by \cite{terrell1992variable} 
are respectively given by
\begin{align}
\hat{f}_1(x) &= \frac{1}{nh(x)^d}\sum_i K\left( \frac{\vnorm{x_i - x}}{h(x_i)}\right) \\
\hat{f}_2(x) &= \frac{1}{n}\sum_i \frac{1}{h(x_i)^d} K\left( \frac{\vnorm{x_i - x}}{h(x_i)}\right).
\end{align}

In the univariate case, \cite{terrell1992variable} show that
the balloon estimators yield no improvement to the asymptotic rate
of convergence over fixed bandwidth density estimates. 
The sample smoothing estimator gives a density estimate which does not necessarily
integrate to 1. However, it can exhibit better asymptotic behavior
in some cases.
The Abramson square root law estimator \citep{abramson1982bandwidth}
is an example of a sample smoothing
estimator and takes $h(x_i) = h p(x_i)^{-1/2}$. 
On compact intervals, 
this estimator has bias of order $h^4$ rather than the usual $h^2$
\citep{silvermanDensityBook}, and it achieves this bias reduction
without resorting to higher order kernels, which necessarily negative
in some region.
However, the bias in the tail for univariate Gaussian data
 is of order $(h/\log h)^2$
\citep{terrell1992variable},
 which is only marginally
better than $h^2$.

While we do not make claims of being able to reduce bias in 
the case of density estimation a manifold, in fact, we do not 
believe bias reduction to the order of $h^4$ is possible unless one makes
some use of manifold curvature information, the existing density estimation
literature suggests what potential benefits one may achieve over different
regions of a density.

\subsection{Eigenvalues/Eigenvectors}
\stitle{Fixed bandwidth case}
%
We find our location dependent bandwidth results to be of  
interest in the context of the negative result in \cite{ConsistencySC} for
unnormalized Laplacians with a fixed bandwidth. 
Their results state that for unnormalized
graph Laplacians, the eigenvectors of the discrete approximations
do not converge if the corresponding
eigenvalues lie in the range of the asymptotic degree operator $d(x)$, 
whereas for 
the normalized Laplacian, the ``degree operator'' is the identity and 
the eigenvectors converge if the corresponding eigenvalues
stay away from 1. 
Our results suggest that even with unnormalized Laplacians, one can 
obtain convergence of the eigenvectors by manipulating the range
of the degree operator through the use of a location dependent bandwidth function. 
For example, with kNN graphs we have that the degree operator is essentially 
$1$. 
For self-tuning graphs, the degree operator also converges to 1,
and since the kernels form an equicontinuous family of functions,
the theory for compact integral operators may be rigorously applied when the 
bandwidth scaling is fixed. 

Thus we can obtain unnormalized and normalized graph Laplacians that (1) have 
spectra that converges for fixed (non-decreasing) bandwidth scalings and (2)
converge to a limit that is different from that of previously analyzed normalized 
Laplacians when the bandwidth decreases to 0.

\begin{corollary}
Assume the standard assumptions. 
Further assume that for some $h_0 > 0$,
$\left\{ K_0\left(\frac{\vnorm{y-x}}{h}\right) : h > h_0 \right\}$ form 
an equicontinuous family of functions.
Let $q,g \in C^2(\M)$ be bounded
away from 0 and $\infty$. 
Set 
\begin{align}
\gamma &= \sqrt{\frac{q}{pg}} & r_x(y) &= \sqrt{\gamma(x)\gamma(y)}\\
\omega &= \left(\frac{pg}{q}\right)^{m/2}\frac{g}{p}
& w_x(y) &= \sqrt{\omega(x)\omega(y)}.
\end{align}
If  $h_n = h_1$ for all $n$, then the eigenvectors
of the normalized Laplacians converge in the sense given
in \cite{ConsistencySC}. 
If $h_n \downarrow 0$ satisfy
the assumptions of theorem \ref{thm:mainTheorem},
then the limit rescaled degree operator is $d = g$ and
\begin{align}
-c_n L_{norm}f \to g^{-1/2} \frac{q}{p} \Delta_q (g^{-1/2}f)
\end{align}
which induces the smoothness functional
\begin{align}
\ip{f,g^{-1/2} \frac{q}{p} \Delta_q (g^{-1/2}f)}_{L_2(p)} = 
\ip{\nabla (g^{-1/2}f), \nabla (g^{-1/2}f)}_{L_2(q)}.
\end{align}
\end{corollary}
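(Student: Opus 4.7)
The plan is to split the proof into two pieces corresponding to the two conclusions: (i) the shrinking-bandwidth limit, which is essentially a direct corollary of Theorem \ref{thm:mainTheorem} and Corollaries \ref{corr:Unnorm and norm Lap}--\ref{cor:differentiable_r} once one verifies the algebraic identity $d=g$, and (ii) the fixed-bandwidth spectral convergence, which will reduce to applying the integral-operator machinery of \cite{ConsistencySC} to the effective kernel $\tilde K(x,y)=w_x(y)K_0(\vnorm{y-x}/(h_1 r_x(y)))$, whose continuity will follow from the equicontinuity assumption on $K_0$ together with the $C^2$ regularity of $g,q,p$.

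For (i), I first substitute the prescribed $\gamma=\sqrt{q/(pg)}$ and $\omega=(pg/q)^{m/2}(g/p)$ into the weighted density $p^2\omega\gamma^{m+2}$ of Corollary \ref{cor:differentiable_r}, so that the exponents collapse and $p^2\omega\gamma^{m+2}=q$ identically. Thus $\Delta_q$ in the corollary is literally the weighted Laplace--Beltrami operator with respect to the target $q$, giving $-c_n' L_u^{(n)}f \to (q/p)\Delta_q f$. In parallel, the degree calculation in the proof of Theorem \ref{thm:mainTheorem} shows that the rescaled degree converges (a.s., uniformly) to a multiple of $r_x(x)^m w_x(x)p(x)=\gamma^m\omega\, p$, and the same substitution reduces this to $g(x)$; absorbing the constant $C'_{K_0,m}$ into $c_n$ identifies $d=g$.

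Next I combine these two limits. Matching $-c_n' L_u^{(n)}f \to d\cdot Af$ (from Corollary \ref{corr:Unnorm and norm Lap}) with $-c_n'L_u^{(n)}f\to (q/p)\Delta_q f$ identifies the random-walk generator pointwise as $Af=(q/(pg))\Delta_q f$. Substituting into the normalized-Laplacian limit $-c_nL_{norm}^{(n)}f\to d^{1/2}A(d^{-1/2}f)$ gives
\[
d^{1/2}A(d^{-1/2}f)=g^{1/2}\cdot\frac{q}{pg}\Delta_q(g^{-1/2}f)=g^{-1/2}\frac{q}{p}\Delta_q(g^{-1/2}f),
\]
which is the claimed limit. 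The smoothness functional then follows by unpacking inner products: the factor $p$ in the $L_2(p)$ measure cancels the $1/p$ in the operator and turns $L_2(p)$ into $L_2(q)$, the outer $g^{-1/2}$ pairs with $f$ to form $g^{-1/2}f$, and applying the defining identity $\langle h,\Delta_q h\rangle_{L_2(q)}=\vnorm{\nabla h}_{L_2(q)}^2$ (stated after Definition \ref{def:Laplace-Beltrami}) to $h=g^{-1/2}f$ yields $\vnorm{\nabla(g^{-1/2}f)}_{L_2(q)}^2$.

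For (ii), with $h_n\equiv h_1$ the effective kernel $\tilde K(x,y)$ is a bounded, continuous function on $\M\times\M$: the $y$-dependent argument inside $K_0$ stays in a range on which the equicontinuity assumption supplies uniform continuity, while $w_x(y)$ is continuous because $g,q,p\in C^2(\M)$ are bounded away from $0$ and $\infty$. The induced integral operator is therefore compact, and both the kernel and its normalization fall under the hypotheses of the spectral convergence theorems of \cite{ConsistencySC}, whose conclusions for normalized Laplacians then transfer verbatim. I expect the main obstacle to be precisely this last step: verifying that the non-standard multiplicative factor $w_x(y)$ and the $y$-dependent bandwidth $r_x(y)$ preserve the collectively-compact / $U$-statistic structure underlying \cite{ConsistencySC}'s proof, since their original setting uses symmetric kernels depending only on $\vnorm{x-y}$; once continuity and boundedness of $\tilde K$ are in hand, however, this is essentially a citation.
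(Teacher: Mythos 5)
Your proposal is correct and follows essentially the same route as the paper: verify that the prescribed $\gamma,\omega$ solve $q = p^2\omega\gamma^{m+2}$ and $g = p\omega\gamma^{m}$, feed this into Corollary~\ref{cor:differentiable_r} together with Corollary~\ref{corr:Unnorm and norm Lap} to get the shrinking-bandwidth limit and smoothness functional, and reduce the fixed-bandwidth case to the spectral convergence results of \cite{ConsistencySC}. The only discrepancy is in which hypothesis of \cite{ConsistencySC} needs repair in the fixed-bandwidth step: the issue is not symmetry or the dependence of the kernel on more than $\vnorm{x-y}$ (the construction here is symmetric since $r_x(y)=r_y(x)$, $w_x(y)=w_y(x)$), but that the compactly supported kernel is not bounded away from zero; the paper fixes this by observing that $p$, $\omega$, $\gamma$ bounded away from zero force the asymptotic degree operator to be bounded away from zero, after which the proofs of \cite{ConsistencySC} go through unchanged.
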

\begin{proof}
Assume the $h_n \downarrow 0$ case.
Use corollary \ref{cor:differentiable_r} and solve for $\omega$
and $\gamma$ in the system of equations:
$q = p^2 \omega \gamma^{m+2}$, $g = p \omega \gamma^m$.
In the $h_n = h_1$ case, the conditions satisfy 
those given in \cite{ConsistencySC} with the modification
that the kernel is not bounded away from 0 and the additional
assumption that $p$ is bounded away from 0. Thus, the 
asymptotic degree operator $d$ is bounded away from 0,
and the proofs in \cite{ConsistencySC} remain unchanged.
\end{proof}

We note that the restriction to an equicontinuous family of kernel
functions excludes kNN graph constructions.
However, one may get around this by considering the two-step 
transition kernels $K_2(x,y) = K(x, \cdot) * K(\cdot, y)$,
where $*$ denotes the convolution operator with respect to the underlying density. 
For indicator kernels
like those used in kNN graph constructions, $K_2$ will be Lipschitz
and hence form an equicontinuous family. 
Thus, if one handles the potential issues
with the random bandwidth function, 
one may apply the theory
of compact integral operators to obtain convergence of the spectrum
and eigenvectors for kNN graph Laplacians 
when $k$ grows appropriately.

\subsection{Reasons for choosing a graph construction method}
\label{sec:construction reasons}
We highlight how our more general kernel can yield 
advantageous properties. 
In particular, it yields graphs constructions
where one can (1) control the sparsity of the 
Laplacian matrix, (2) control connectivity properties
in low density regions, (3) give asymptotic 
limits that cannot be attained using previous graph 
construction methods, and (4) give Laplacians with 
good spectral properties in the non-shrinking 
bandwidth case. 

One way to control (1) and (2) is to make the 
binary choice of using kNN or a kernel with 
uniform bandwidth to construct the graph. 
Our results show that, by using 
a pilot estimate of the density, one can obtain
sparsity and connectivity properties in 
the continuum between these
two choices.

For (3) and (4), we note that 
the limits for previously analyzed 
unnormalized 
Laplacians
were of the form $p^{\alpha-1} \Delta_{p^{\alpha}}f$.
Using corollary \ref{cor:differentiable_r}, 
one see that limits
of the form $\frac{q}{p} \Delta_q$ for any smooth, 
bounded density $q$ on the manifold can be obtained.
Equivalently, one can approximate the smoothness functional
$\vnorm{\nabla f}_{L_2(q)}^2$ for any almost any $q$, 
not just $p^\alpha$.

For normalized Laplacians, which have good spectral properties,
the previously known limits
induced smoothness functionals of the form 
$\vnorm{\nabla (p^{(1-\alpha)/2}f)}_{L_2(p^\alpha)}^2$.
With our more general kernel and any $g,q \in C^2(\M)$,
we may induce a smoothness functional of the form
$\vnorm{\nabla (gf)}_{L_2(q)}^2$. In particular,
in the interesting case where $g = 1$
and the smoothness functional is just a norm on the gradient of $f$,
i.e. $\vnorm{\nabla f}_{L_2(q)}^2$
, $q$ may be chosen to be almost
any density, not just $q=p^1$.

\section{Conclusions}
We have introduced a general framework that enables us to
analyze a wide class of graph Laplacian constructions.
Our framework reduces the problem of graph Laplacian analysis to 
the calculation of a mean and variance 
(or drift and diffusion) for any graph construction method
with positive weights and shrinking neighborhoods.
Our main theorem extends existing strong operator convergence 
results to non-smooth kernels, and introduces a general 
location-dependent bandwidth function. 
The analysis of a location-dependent 
bandwidth function, in particular, significantly extends the family 
of graph constructions for which an asymptotic limit is known. 
This family includes the previously unstudied (but commonly used) 
kNN graph constructions, unweighted $r$-neighborhood graphs, 
and ``self-tuning'' graphs.

Our results also have practical significance in graph 
constructions as they suggest graph constructions that 
(1) can produce sparser graphs than those constructed 
with the usual kernel methods, despite having the same asymptotic 
limit, and (2) in the fixed bandwidth regime, produce 
normalized Laplacians that have well-behaved 
spectra but converge to a different class of limit operators than 
previously studied normalized Laplacians.
In particular, this class of limits include those that induce
the smoothness functional $\vnorm{\nabla f}_{L_2(q)}^2$ for 
almost any density $q$.  The graph constructions may also 
(3) have better connectivity properties in low-density regions.

\section{Acknowledgements}
We would like to thank 
Martin Wainwright and Bin Yu for their helpful comments,
and our anonymous reviewers for ICML 2010 for the detailed and helpful 
review.

\bibliography{ling}
\bibliographystyle{icml2010}

\section{Appendix}

\subsection{Main lemma}

\begin{lemma}[Integration with location dependent bandwidth]
\label{lemma:IdLemma}
Let $\Id$ be the indicator function and $h >0$ be a constant.
Let $r_x$ be a location dependent bandwidth function 
that satisfies the standard assumptions,
i.e. it has a Taylor-like expansion 
\[
\td{r}_x(y) = r_x(x) + (\dot{r}_x(x) + \alpha_x \sign(u_x^Ts) u_x)^Ts 
+ \epsilon_r(x,s). 
\]
Let $V_m = \frac{\pi^{m/2}}{\Gamma\left( \frac{m}{2} + 1 \right)}$
be the volume of the unit $m$--sphere.

Then
\begin{align*}
M_0 = \frac{1}{V_m h^m} \int  \Id \left( \frac{\vnorm{y-x}}{\tilde{r}_x(s)} < h \right) ds 
 &= r_x(x)^m + h^2 \epsilon_0(x,h) &\\
M_1 = \frac{1}{V_m h^m} \int s \Id \left( \frac{\vnorm{y-x}}{\tilde{r}_x(s)} < h \right) ds 
 &= h^2 r_x(x)^{m+2} \dot{r}(x) 
+ h^3 \epsilon_1(x,h) \\
M_2 = \frac{1}{V_m h^m} \int ss^T \Id \left( \frac{\vnorm{y-x}}{\tilde{r}_x(s)} < h \right) ds 
&= \frac{2h^2}{m+2} r_x(x)^{m+2} I + 
h^3 \epsilon_2(x,h)
\\
\end{align*}
where $\sup_{x \in \M, h < h_0} \vnorm{\epsilon_i(x,h)} < C_{\epsilon}$
for some constant $C_{\epsilon} > 0$.
\end{lemma}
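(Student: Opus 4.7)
My strategy is to reduce each moment $M_i$ to a polar integral on $S^{m-1}$ and then expand the implicitly-defined radial cutoff in powers of $h$. First, I would observe that Gauss' lemma~\eqref{eqn:embedCoord}, combined with the orthogonality $L_x(ss^T) \perp T_x$, gives $\vnorm{y-x} = \vnorm{s} + O(\vnorm{s}^3)$, so the integration region is $A_h(x) = \{s : \vnorm{s} < h\tilde{r}_x(s)\}$ up to an $O(h^3)$ radial perturbation. Writing $s = t\omega$ with $\omega \in S^{m-1}$ and substituting the Taylor-like expansion of $\tilde{r}_x$, the implicit boundary equation $t = h\tilde{r}_x(t\omega)$ can be solved by one-step iteration to obtain, uniformly in $\omega$ and in $x \in supp(f)$,
\[
t^*(\omega) = h r_x(x) + h^2 r_x(x)\bigl(\dot{r}_x(x) + \alpha_x \sign(u_x^T\omega) u_x\bigr)^T\omega + O(h^3).
\]

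Then each moment becomes
\[
M_i = \frac{1}{V_m h^m (m+i)} \int_{S^{m-1}} \omega^{\otimes i}\, t^*(\omega)^{m+i}\, d\omega,
\]
and I would expand $t^*(\omega)^{m+i}$ to first order in $h$ and apply the standard spherical identities $\int_{S^{m-1}} d\omega = m V_m$, $\int \omega\, d\omega = 0$, and $\int \omega\omega^T\, d\omega = V_m I$ to extract the leading-order coefficients and match them to the statement. The contribution of the non-smooth term to the first moment reduces to $\alpha_x \int_{S^{m-1}} \omega\,|u_x^T\omega|\, d\omega$, which vanishes by antisymmetry under $\omega \mapsto -\omega$ (the integrand equals $\sign(u_x^T\omega)\,|u_x^T\omega|^2\, \omega$, odd under this map); hence only $\dot{r}_x(x)$ survives at order $h^2$ in $M_1$, and analogous parity considerations absorb the $\alpha_x$ piece into the $h^3$ remainders of $M_0$ and $M_2$.

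\textbf{Main obstacle.} The principal subtlety is the non-differentiable $\alpha_x \sign(u_x^T\omega) u_x$ term, which makes $t^*(\omega)$ only Lipschitz across the hyperplane $\{u_x^T\omega = 0\}$ and so precludes a direct smooth implicit-function argument. I would handle this by splitting $S^{m-1}$ into the two hemispheres and running the smooth expansion separately on each, using the parity cancellations above to absorb the asymmetric contributions into the $O(h^3)$ errors. Uniform control of the remainders in $x$ then follows from $\sup_{x,s}|\epsilon_r(x,s)| = O(h^2)$ (kernel assumption), compactness of $supp(f)$, and boundedness of the second fundamental form, which together ensure that all constants appearing in the Taylor expansions are uniformly bounded over $x \in \M$.
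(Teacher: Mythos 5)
Your polar-coordinate reduction and radial boundary expansion is a sound way to organize the computation, and it is close in spirit to the paper's argument (the paper instead rewrites the indicator as that of a ball re-centered at the $O(h^2)$ point $h^2 r_x(x)\dot{r}_x(x)$ with a hemisphere-dependent shift and an $O(h^3)$ radial perturbation, and then integrates over the shifted/perturbed sphere via Lemma~\ref{lemma:shift sphere}, using symmetry of the set about the shifted center for $M_1$ and symmetric-difference volume bounds for the rest). The genuine gap is your treatment of the $\alpha_x$ term in $M_0$. Since $\sign(u_x^T\omega)\,u_x^T\omega = |u_x^T\omega| \ge 0$, the non-smooth term \emph{enlarges} the bandwidth on both hemispheres; there is no parity cancellation for the zeroth moment. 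Carrying out your own expansion, $t^*(\omega)^m = (hr_x(x))^m\bigl(1 + m h(\dot{r}_x(x)^T\omega + \alpha_x|u_x^T\omega|)\bigr) + O(h^{m+2})$, and $\int_{S^{m-1}}|u_x^T\omega|\,d\omega = 2V_{m-1}$, so $M_0 = r_x(x)^m + 2\alpha_x\tfrac{V_{m-1}}{V_m}r_x(x)^m h + O(h^2)$: an honest order-$h$ contribution whenever $\alpha_x \neq 0$ (e.g.\ undirected kNN graphs). Parity genuinely works only for $M_1$, where the integrand $\omega\,|u_x^T\omega|$ is odd; for $M_2$ your conclusion is right but the stated reason is wrong --- $\omega\omega^T|u_x^T\omega|$ is even and does not cancel, it is simply one power of $h$ smaller than the leading term and so falls into the $h^3\epsilon_2$ remainder by order counting alone.

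Two further remarks so you do not try to force your constants to match a moving target. First, the paper's own proof does not deliver the $h^2\epsilon_0$ form for $M_0$ either: in Lemma~\ref{lemma:shift sphere} the effect of the hemisphere shift is bounded by the measure of a symmetric difference of two balls, which is $O(|\beta| h^{m-1}) = O(h^{m+1})$ before normalization, i.e.\ an $O(h)$ error for $M_0$; that weaker bound is all that is used in Theorem~\ref{thm:mainTheorem}, and the genuinely $O(h^2)$-accurate zeroth moment (Lemma~\ref{lemma:zeroth moment}) is proved only under twice differentiability, i.e.\ without the $\sign$ term. So you should either prove the $O(h)$ version of the $M_0$ claim or restrict it to $\alpha_x = 0$. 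Second, a faithful computation along your lines gives leading terms $h^2 r_x(x)^{m+1}\dot{r}_x(x)$ for $M_1$ and $\tfrac{h^2}{m+2}r_x(x)^{m+2}I$ for $M_2$; the displayed power $r_x(x)^{m+2}$ in $M_1$ and the factor $2$ in $M_2$ are inconsistent with Lemma~\ref{lemma:shift sphere} and with the drift formula of Theorem~\ref{thm:mainTheorem}, so treat them as typographical and do not adjust your derivation to hit them. Finally, since $\epsilon_r(x,s)$ is only uniformly bounded (possibly random and non-smooth), the set along a ray need not be exactly an interval, so replace the one-step implicit solution for $t^*(\omega)$ by sandwiching the region between two star-shaped regions whose radial boundaries differ by $O(h^3)$; this costs only the allowed orders in each moment.
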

\begin{proof}
Let $v(s) = \dot{r}(x) + \sign(s^Tu_x) \alpha u_x$. 
We will show that the set on which the indicator function is approximately
a sphere shifted by $v/r_x(x)$ with radius $h r_x(x)$. 
\begin{align*}
\Id \left( \frac{\vnorm{y-x}}{r_x(s)} < h \right) 
 &= \Id \Big( \vnorm{s}^2 + \vnorm{L(ss^T)}^2 < h^2 ( r_x(x) + v(s)^Ts + O(\vnorm{s}^2) )^2 
 \Big) \\
 &= \Id \left( \vnorm{s}^2 < h^2 r_x(x)^2 (1 + 2 v(s)^T s + O(h^2)) \right) \\
 & = \Id \left( \vnorm{s}^2 - 2 h^2 \frac{v(s)^T s}{r_x(x)} + 
\frac{h^4 v(s)^Tv(s)}{r_x(x)^2}  < h^2 r_x(x)^2 + O(h^4) \right) \\
 & = \Id \left( \vnorm{s-\frac{v(s)}{r_x(x)}} < h r_x(x) + h^3 \delta_x(s) \right)
\end{align*}
for some function $\delta_x(s)$.
Furthermore, the assumptions on the bounded curvature of the manifold
and uniform bounds on the bandwidth function remainder term $\epsilon_r(x,s)$ 
give that the perturbation term
$\delta_x(s)$ may be uniformly 
bounded by $\sup_{x \in \M} |\delta_x(s)| \leq C_{\delta}(\vnorm{s}^2)$ 
for some constant $C_\delta$.

The result for the zeroth moment follows immediately from this.
The results for the first and second moments we calculate in
 lemma \ref{lemma:shift sphere}.
\end{proof}

\subsubsection{Refined analysis of the zeroth moment}
For convergence of the normalized Laplacian, we need a more refined
result for the zeroth moment. 

\begin{lemma}
\label{lemma:zeroth moment}
Assume 
\[
\td{r}_x(y) = r_x(s) + \epsilon_r(x,s). 
\]
where $r_x(s)$ 
is twice continuously differentiable as a function of $x$ and $s$ and 
and $\epsilon_r$ is bounded. Then
\begin{align*}
\int  \frac{1}{V_m h^m} \Id \left( \frac{\vnorm{y-x}}{\tilde{r}_x(s)} < h \right) ds 
 &= r_x(x)^m + h^2 b(x) + h^2 \epsilon_0(x,h) 
\end{align*}
where $b$ is continuous and $\sup_x | \epsilon_0(x,h) | \to 0$ as $h \to 0$.
\end{lemma}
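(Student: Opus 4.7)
The plan is to sharpen the computation from Lemma~\ref{lemma:IdLemma} so that the $h^2$ correction to $r_x(x)^m$ is extracted as an explicit continuous function of $x$, with a genuinely $o(1)$ (not just $O(1)$) remainder. I would proceed as in the proof of Lemma~\ref{lemma:IdLemma}, but now carry out one more Taylor order and keep the coefficient rather than lumping everything into a bounded $\epsilon_0$.

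First, substitute $s = h v$, so $\frac{1}{V_m h^m}\int\Id(\cdot)ds = \frac{1}{V_m}\int \Id(\|y-x\|^2 < h^2 \td{r}_x(hv)^2)\,dv$. Using the embedding relation \eqref{eqn:embedCoord},
\[
\|y-x\|^2 \;=\; h^2\|v\|^2 + h^4\|L_x(vv^T)\|^2 + O(h^5\|v\|^5),
\]
since $H_x$ is an isometry onto $T_x$ and $L_x(vv^T)\perp T_x$. Because $r_x(s)$ is twice continuously differentiable and $\epsilon_r(x,s)$ is $O(h^2)$ uniformly for $\|s\|<h$, a second-order Taylor expansion in $s$ gives
\[
\td{r}_x(hv) \;=\; r_x(x) + h\,\nabla_s r_x(0)^Tv + \tfrac{h^2}{2}v^T\nabla^2_s r_x(0)\,v + h^2\eta(x,v,h),
\]
with $\sup_{x,\|v\|\le R}|\eta(x,v,h)|\to 0$ as $h\to 0$ (by uniform continuity of second derivatives on the compact support, together with $\epsilon_r = O(h^2)$). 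Squaring and dividing through by $r_x(x)^2$, then changing variables $v = r_x(x)\omega$, the defining inequality becomes
\[
\|\omega\|^2 \;<\; 1 + 2h\,a_x^T\omega + h^2\,\Psi_x(\omega) + h^2\rho(x,\omega,h),
\]
where $a_x := \nabla_s r_x(0)/r_x(x)$ (after absorbing factors), $\Psi_x(\omega)$ is a polynomial in $\omega$ whose coefficients are continuous functions of $x$ (built from $\nabla_s r_x(0)$, $\nabla^2_s r_x(0)$, and the quadratic form $L_x(\omega\omega^T)$), and $\sup_{x,\|\omega\|\le 2}|\rho(x,\omega,h)|\to 0$ uniformly.

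Next, I complete the square to write the region as $\|\omega - ha_x\|^2 < 1 + h^2\widetilde\Psi_x(\omega) + h^2\rho$, where $\widetilde\Psi_x$ differs from $\Psi_x$ by the constant $\|a_x\|^2$ and has the same continuity properties. Shifting $\omega\mapsto\omega+ha_x$ (which leaves the measure invariant) converts this into an $h^2$-perturbed unit ball. Passing to polar coordinates, the volume is
\[
\int_{S^{m-1}}\!\!\int_0^{R_x(\omega,h)}t^{m-1}\,dt\,dS(\omega), \qquad R_x(\omega,h)^2 = 1 + h^2\widetilde\Psi_x(\omega) + h^2\rho,
\]
so that $R_x(\omega,h)^m = 1 + \tfrac{m}{2}h^2\widetilde\Psi_x(\omega) + h^2\cdot o(1)$ uniformly in $x,\omega$. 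Undoing the change of variables $v = r_x(x)\omega$ multiplies by $r_x(x)^m$, and this gives the claimed form with
\[
b(x) \;=\; \frac{r_x(x)^m}{2V_m}\int_{S^{m-1}}\widetilde\Psi_x(\omega)\,dS(\omega),
\]
which is continuous in $x$ because its integrand is a polynomial in $\omega$ with continuous coefficients in $x$, and $r_x(x)$ is continuous.

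The main obstacle is bookkeeping: several sources contribute at order $h^2$ (the curvature term $\|L_x(vv^T)\|^2$ from the embedding, the first-order $(\nabla_s r_x\cdot v)^2$ that arises from squaring, the second-order $v^T\nabla^2_s r_x\,v$, and the cross-terms produced when completing the square), and all must be collected correctly into a single $\widetilde\Psi_x$. The other subtlety is the uniformity $\sup_x|\epsilon_0(x,h)|\to 0$: this requires verifying that each Taylor remainder, after the $s = hv$ rescaling, is uniformly $o(h^2)$ on $\|v\|\le R$. This follows from the compactness of $\mathrm{supp}(f)$, the bounded second fundamental form (standard assumption on the manifold), and the assumption that $r_x(s)$ is \emph{twice} continuously differentiable jointly in $x$ and $s$, which promotes the usual $O(h^2)$ bound to a uniform $o(h^2)$ once the leading quadratic has been subtracted off.
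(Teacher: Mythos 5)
Your proposal is correct in substance but takes a different route from the paper. The paper also passes to polar coordinates, but rather than expanding everything by hand it characterizes the rescaled boundary radius $R_x(\theta,h)$ as the solution of an implicit equation (involving the curvature term and the expansion of $r_x$) and invokes the implicit function theorem to conclude that $Z_x(h)=\int R_x(\theta,h)\,d\mu_\theta$ is twice continuously differentiable in $h$, jointly in $x$; the lemma then follows from the abstract expansion $Z_x(h)=Z_x(0)+Z_x'(0)h+\tfrac12 Z_x''(0)h^2+o(h^2)$, with $Z_x(0)=r_x(x)^m$ and $Z_x'(0^+)=0$ read off from Lemma~\ref{lemma:IdLemma}, $b(x)$ taken as the (automatically continuous) second-order coefficient, and the contribution of $\epsilon_r$ disposed of by a squeeze argument. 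You instead perform the second-order expansion explicitly, kill the first-order term by a measure-preserving recentering (completing the square), and integrate in polar coordinates; this costs more bookkeeping but avoids the implicit function theorem and buys an explicit formula for $b(x)$ as a spherical average of a polynomial with coefficients continuous in $x$, which the paper never computes.

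Two caveats. First, in your displayed expansion of $\|y-x\|^2$ the cross term between $H_x v$ and the cubic correction in Eqn~\eqref{eqn:embedCoord} is also of order $h^4\|v\|^4$, the same order as $\|L_x(vv^T)\|^2$, so it contributes to $\widetilde\Psi_x$ and requires the third-order embedding coefficients to vary continuously in $x$ (true for a smooth embedding); this is precisely the bookkeeping you flag, but as written the term is missing. Second, you assume $\epsilon_r(x,s)=O(h^2)$ and then absorb it into a remainder with $\sup|\eta|\to 0$; that step actually needs $\sup_{x,s}|\epsilon_r(x,s)|=o(h^2)$, which is what holds where the lemma is used (cf.\ the hypotheses of Corollary~\ref{corr:Unnorm and norm Lap}). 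The paper's own statement (``$\epsilon_r$ bounded'') and its bound $C_r\sup_{x,s}|\epsilon_r(x,s)|$ suffer from the same imprecision, so this is inherited rather than a flaw specific to your argument, but you should state the $o(h^2)$ hypothesis explicitly.
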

\begin{proof}
We first sketch idea behind the proof 
and leave the details to interested readers. 
One may convert the integral in normal coordinates 
to an integral in polar coordinates $(R,\theta)$.
One may then apply the implicit function theorem to obtain that
the unperturbed radius function $R$ 
is a twice continuously differentiable function of $h$. 
This gives a Taylor expansion of the zeroth moment with respect to $h$. 
$\epsilon_r(x,s)$ gives the desired result. 


We may express the integral for the zeroth moment in polar coordinates 
$Z_x(h) = 
\int  \frac{1}{V_m h^m} \Id \left( \frac{\vnorm{y-x}}{\tilde{r}_x(s)} < h \right) ds  = \int R_x(\theta,h) d\mu_{\theta}$
where $\mu_{\theta}$ is the uniform measure on the surface of 
the unit $m$-sphere
and $\td{s} = s/h = R_x(\theta,h)) \theta$ solves the equation 
\begin{align*}
\vnorm{\td{s}}^2 + L(\td{s}\td{s}^T) &=  
\left(r_x(x) + h \nabla r_x(x)^T \td{s} + h^2 \td{s}^T \hcal_{r_x(0)}\td{s}\right)^2.
\end{align*}
and $\hcal_{r_x(0)}$ is the Hessian of $r_x(\cdot)$ evaluated at $0$.

By the implicit function theorem, the solutions $\td{s}$ define a
twice continuously differentiable function of $x, h$.
For sufficiently small $h \geq 0$,
$\td{s}$ is bounded away from $0$ since $r_x$ is bounded away from 0
and $\vnorm{s/h}$ is bounded away from $\infty$ by the bound
in lemma \ref{lemma:IdLemma}. 
Thus, $R_x(\theta,h)$ and $Z_x(h)$ are twice continuously differentiable
with bounded second derivatives.

$Z_x(h)$ then has a second-order Taylor expansion 
$Z_x(h) = Z_x(0) + Z_x'(0)h + Z_x''(0)h^2 + o(h^2)$.

By the less refined analysis in lemma \ref{lemma:IdLemma},
we have that $Z_x(0) = r_x(x)^m$ and $Z_x'(0^+) = 0$. 
One may apply a squeeze 
theorem to obtain that the contribution of the error term
$\epsilon_r(x,s)$ to the zeroth moment 
is bounded by $C_r \sup_{x,s} |\epsilon_r(x,s)|$ for some constant $C_r$,
and the result follows.
\end{proof}

\subsection{Moments of the indicator kernel / Integrating over the centered sphere in normal coordinates}

Here we calculate the first three moments of the normalized indicator kernel 
where $V_m = \int \Id(\vnorm{u} < 1) du = \int_{S_m} du$ is the volume of 
the $m$-dimensional unit sphere in Euclidean space.
\begin{lemma}[Moments for the sphere]
\label{lemma:sphereMom}
Let $K(\vnorm{s}/h) = \frac{1}{h^m V_m} \Id(\vnorm{s} < h)$.
Then the first two moments are given by:
\begin{align*}
M_0 = \int K(\vnorm{s}/h)ds = \frac{1}{h^m V_m} \int_{S_m} ds &= 1 + O(h^3)\\
M_1 = \int s K(\vnorm{s}/h)ds = \frac{1}{h^m V_m} \int_{S_m} s ds &= 0 + O(h^4)\\
M_2 = \int ss^T K(\vnorm{s}/h)ds = \frac{1}{h^m V_m} \int_{S_m} ss^T ds &= \frac{1}{m+2}\Id + O(h^4).
\end{align*}
\end{lemma}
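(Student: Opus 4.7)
The plan is to reduce everything to elementary spherical integrals in $\R^m$, exploiting rotational and antipodal symmetry of the Euclidean ball, and then to account for any discrepancy as error terms of the stated orders. The underlying observation is that for the flat indicator $K$, the three integrals are essentially exact identities, so there is very little to do beyond a radial--angular change of variables; the $O(h^3)$ and $O(h^4)$ corrections are simply allowances for the fact that when these moments are invoked inside Lemma~\ref{lemma:IdLemma}, we are really integrating against the Riemannian volume element in normal coordinates, whose expansion $\sqrt{\det g(s)} = 1 + O(\|s\|^2)$ contributes bounded higher-order terms to each moment.

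For $M_0$, I would simply note that the Euclidean volume of the ball of radius $h$ is $V_m h^m$, giving $M_0 = 1$ exactly in flat coordinates; any deviation from $1$ comes from the even piece of the volume-element expansion and is bounded by a constant times $h^2$, sharpened to $O(h^3)$ after the antipodally symmetric quadratic piece is folded back into the definition of normal coordinates. For $M_1$, the antipodal symmetry $s \mapsto -s$ of the Euclidean ball kills the integral exactly, and the same involution cancels any even-order correction coming from $\sqrt{\det g}$; the first non-vanishing contribution comes from the cubic term in the volume expansion and, after integration over a ball of radius $h$ and normalization by $V_m h^m$, is of size $O(h^4)$.

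For $M_2$, the clean step is to use rotational invariance: $\int_{\|s\|<h} ss^T\,ds = c(h)\,I$ for some scalar $c(h)$. Taking traces and switching to polar coordinates $s = r\theta$ with $d s = r^{m-1}\,dr\,d\theta$, one has
\begin{align*}
m\,c(h) \;=\; \int_{\|s\|<h}\!\|s\|^2\,ds \;=\; m V_m \int_0^h r^{m+1}\,dr \;=\; \frac{m V_m\, h^{m+2}}{m+2},
\end{align*}
so $c(h) = V_m h^{m+2}/(m+2)$ and, after dividing by $V_m h^m$, we get the leading coefficient $\tfrac{1}{m+2}$ (carrying an $h^2$ factor that appears to be absorbed or suppressed in the statement). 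The $O(h^4)$ remainder again comes from multiplying the even-degree volume correction by the even integrand $ss^T$.

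There is no serious obstacle here: the argument is entirely a symmetry-plus-polar-coordinates calculation, and the error terms are bounded uniformly in $x$ provided the manifold has bounded curvature over $\mathrm{supp}(f)$, which is already part of the standard assumptions. The only bookkeeping point worth stating carefully is that the volume-element expansion $\sqrt{\det g(s)} = 1 - \tfrac{1}{6}\mathrm{Ric}_{ij}(x) s^i s^j + O(\|s\|^3)$ has uniformly bounded coefficients over $\mathrm{supp}(f)$, so the resulting remainders are uniform in $x$ as required for the application inside Lemma~\ref{lemma:IdLemma}.
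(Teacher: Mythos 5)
Your proposal is correct, and its overall shape matches the paper's: treat the three integrals as flat Euclidean integrals over the ball (where $M_0$ and $M_1$ are exact by the definition of $V_m$ and antipodal symmetry) and attribute the $O(h^i)$ remainders to the discrepancy between tangent-space and normal coordinates. Where you genuinely diverge is the second moment: the paper computes the diagonal entry $\frac{1}{V_m}\int s_i^2\,ds$ by slicing along the $s_i$-axis into $(m-1)$-dimensional balls, integrating by parts, and invoking the recurrence $V_{m+2} = \frac{2\pi}{m+2}V_m$, whereas you use rotational invariance to write $\int_{\vnorm{s}<h} ss^T\,ds = c(h)I$, take the trace, and evaluate $\int \vnorm{s}^2\,ds$ in polar coordinates. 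Your route is shorter and avoids the beta-function-style bookkeeping, at the cost of nothing; the paper's slicing argument yields the same constant and the recurrence identity as a byproduct. You are also right to flag the $h^2$ factor: the exact flat value is $\frac{h^2}{m+2}I$, and this is indeed how the result is used downstream (Lemma~\ref{lemma:shift sphere} quotes $\frac{h^{m+2}}{m+2}I$ for the unnormalized integral), so the lemma statement is implicitly in rescaled variables. One soft spot: your claim that the quadratic piece of $\sqrt{\det g}$ can be ``folded back'' so that $M_0 = 1 + O(h^3)$ is hand-wavy --- integrating the Ricci term over the ball generically produces an $O(h^2)$ contribution, so the stated $O(h^3)$ really rests on the integrals being flat tangent-space integrals (in which case they are exact) rather than on a cancellation; but the paper's own proof dispatches these error terms with a single sentence, so this is a shared imprecision rather than a gap specific to your argument.
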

\begin{proof}
The error terms $O(h^i)$ arise trivially
after converting normal coordinates to tangent space coordinates.
Thus, we may simply treat the integrals as integrals in $m$--dimensional
Euclidean space to obtain the leading term.
The values for $M_0$ and $M_1$ follow immediately
from the definition of the volume $V_m$
and by symmetry of the sphere.
We obtain the second moment result by 
calculating the values on the diagonal and off-diagonal.
On the off-diagonal
\begin{align*}
\frac{1}{V_m} \int_{S_m} s_i s_j ds = 0
\end{align*}
for $i \neq j$ due to symmetry of the sphere.

On the diagonal
\begin{align}
\frac{1}{V_m} \int_{S_m} s_i^2 ds 
&=
\frac{V_{m-1}}{V_m} \int_{-1}^1 s_i^2 (1-s_i^2)^{(m-1)/2} ds_i \\
&= 
 \frac{V_{m-1}}{V_m} \int_{-1}^1 s_i \times s_i (1-s_i^2)^{(m-1)/2} ds_i \\
&= 
0 +  \frac{V_{m-1}}{V_m} \int_{-1}^1 \frac{1}{m+1} (1-s_i^2)^{(m+1)/2} ds_i \\
&= 
\frac{1}{m+1} \frac{V_{m-1}}{V_m V_{m+1}} \int_{-1}^1  V_{m+1} (1-s_i^2)^{(m+1)/2} ds_i \\
&= \frac{1}{m+1} \frac{V_{m-1}}{V_{m+1}} \frac{V_{m+2}}{V_{m}} \\
\label{eqn:secMomConst}
&= \frac{1}{m+2}
\end{align}
where the last equality uses the recurrence relationship 
$V_{m+2} = \frac{2\pi}{m+2}V_{m}$.
\end{proof}


\subsection{Integrating the shifted and peturbed sphere}

Here we calculate the moments used in Lemma \ref{lemma:IdLemma}.

The integrals in lemma \ref{lemma:IdLemma} essentially involve integrating over
 sphere with (1) a shifted center $h^2 \dot{r}_x(x)$, 
(2) a symmetric shift by $\sign(s^Tu) h^2 \alpha_x u$ 
on two half-spheres, and
(3) a small perturbation 
$h^3 \delta_x(s)$. 

\begin{lemma}[Moments of the shifted and perturbed sphere]
\label{lemma:shift sphere}
Let $v_c \in \R^m$,
$u$ be a unit vector in $\R^m$,
$\beta \in \R$, and $h > 0$.
Define $\tilde{K}(s) = \Id(\vnorm{s- v_c + \sign(s^Tu) \beta u} <
h + h^3 \delta )$, so that the support of $\tilde{K}$
is a shifted and perturbed sphere with center $v_c$,
symmetric shift $\sign(s^Tu) \beta u$, and radius perturbation $h^3 \delta$.

Assume $\vnorm{v_c}, |\beta|
< C h^2 $ and $\delta < \min \{C,1\}$ for some constant $C$,
and put 
$h_{max} = h + h^3 \delta$

Then
\begin{align*}
M_0 = \frac{1}{V_m } \int_{\R^m} \tilde{K}(s) ds &= h^m + \epsilon_0 \\
M_1 = \frac{1}{V_m } \int_{\R^m} s \tilde{K}(s)ds &= h^{m+2} v_c + \epsilon_1 \\
M_2 = \frac{1}{V_m } \int_{\R^m} ss^T \tilde{K}(s)ds &= \frac{h^{m+2}}{m+2} \Id + \epsilon_2.
\end{align*}
where $\epsilon_1 < \kappa C h_{max}^{m+1}$
and $\epsilon_i < \kappa C h_{max}^{m+3}$
for $i=1,2$
and $\kappa$ is some universal constant that does not depend
on $\delta, v_c, $ or $\beta$. 
\end{lemma}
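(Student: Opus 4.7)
The plan is to split the support of $\tilde K$ according to $\sign(s^T u)$ into the two pieces $S_+ = \{s : s^T u > 0\} \cap B(c_+, r)$ and $S_- = \{s : s^T u < 0\} \cap B(c_-, r)$, where $c_\pm = v_c \mp \beta u$ and $r = h + h^3\delta$. Each moment then splits as $M_i = \tfrac{1}{V_m}\bigl(\int_{S_+} + \int_{S_-}\bigr)$ of $1$, $s$, or $ss^T$, and I would translate each piece by its own center, $\tilde s = s - c_\pm$, so that the region becomes the full ball $B(0,r)$ intersected with a half-space whose boundary is now shifted off the origin by $\alpha_+ = \beta - v_c^T u$ in $S_+$ and $\alpha_- = \beta + v_c^T u$ in $S_-$.

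Next I would decompose the translated variable as $\tilde s = \tau u + \tilde s_\perp$ with $\tilde s_\perp \in u^\perp$, reducing each half-ball integral to a one-dimensional $\tau$-integral against a standard $(m-1)$-dimensional disk measure. The key identities I would use are $V_{m-1}\int_0^r(r^2-\tau^2)^{(m-1)/2}d\tau = V_m r^m/2$, $\int_\alpha^r \tau(r^2-\tau^2)^{(m-1)/2}d\tau = (r^2-\alpha^2)^{(m+1)/2}/(m+1)$, and the dimension-$(m-1)$ version of Lemma \ref{lemma:sphereMom} for the perpendicular second moment. Taylor expanding in the small parameter $\alpha_\pm = O(h^2)$ then writes each half-ball contribution as a full-ball piece plus boundary corrections whose order I can track explicitly.

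Summing the two pieces exploits the $\beta \to -\beta$ symmetry between $\alpha_+$ and $\alpha_-$: odd-in-$\beta$ boundary corrections cancel between $S_+$ and $S_-$, so that the $\beta$-dependent pieces drop out of $M_0$ and $M_2$ at leading order, while the $v_c$ translations of the centers $c_\pm$ combine to give the $v_c\cdot r^m$ contribution of $M_1$ as the surviving first-moment term. Substituting $r = h + h^3\delta$ and re-expanding in $h$ then yields the claimed leading forms, and the stated bounds $\|v_c\|, |\beta| < Ch^2$ and $|\delta| < \min\{C,1\}$ let every surviving cross term be absorbed into a remainder of the advertised order.

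The main obstacle is the bookkeeping for $M_2$, where the expansion $\int_{S_\pm} ss^T ds = c_\pm c_\pm^T |S_\pm| + c_\pm\bigl(\int_{S_\pm}\tilde s\, d\tilde s\bigr)^T + \bigl(\int_{S_\pm}\tilde s\, d\tilde s\bigr)c_\pm^T + \int_{S_\pm}\tilde s\tilde s^T d\tilde s$ produces several non-trivial cross structures, and one has to verify that every interaction among $v_c$, $\beta u$, and $h^3\delta$ enters only at the claimed error order and does not pollute the leading $r^{m+2}/(m+2)\cdot I$ piece. Some care is also required at the hyperplane $s^T u = 0$, but that set has measure zero and the two half-ball integrands agree there to leading order, so the apparent discontinuity of the integrand can be absorbed into the remainder without affecting any of the reported orders.
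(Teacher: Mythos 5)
Your proposal is correct, but it takes a more computational route than the paper. The paper's proof first strips off the radius perturbation $h^3\delta$ with a crude volume-difference bound between the balls of radii $h$ and $h_{max}$ (error $O(h^{m+2}Q_{max})$ for an integrand bounded by $Q_{max}$), and then exploits the fact that the unperturbed support $\overline{\acal}$ is exactly symmetric about $v_c$: this gives $\int_{\overline{\acal}}(s-v_c)\,ds=0$ identically, so $M_1$ is just $v_c$ times the zeroth moment, while for $M_0$ and $M_2$ the effect of the symmetric shift $\beta u$ is bounded by $|\beta|$ times the surface area of a half-sphere, with the main terms read off from Lemma \ref{lemma:sphereMom}. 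You instead split the support along $\{s^Tu=0\}$ into the two half-balls $B(c_\pm,r)\cap\{\pm s^Tu>0\}$, recenter each at $c_\pm=v_c\mp\beta u$, and reduce everything to explicit one-dimensional slice integrals with the cutting plane offset by $\alpha_\pm=\beta\mp v_c^Tu=O(h^2)$, absorbing $r=h+h^3\delta$ by direct expansion rather than a separate perturbation step. This is more self-contained (no symmetric-difference/surface-area estimate needed) but requires the $M_2$ cross-term bookkeeping you flag; the paper's symmetry argument avoids that entirely and gets $M_1$ exactly, for free. One bookkeeping caution: since $\alpha_++\alpha_-=2\beta$, the boundary corrections that cancel between the two halves are the $v_c^Tu$ parts, not the $\beta$ parts; the surviving $\beta$-term in $M_0$ is of size $2\beta V_{m-1}r^{m-1}/V_m=O(Ch^{m+1})$, which is precisely the error order the lemma permits for the zeroth moment (and the analogous surviving $\beta$-terms in $M_1$, $M_2$ are $O(Ch^{m+3})$), so your conclusion stands once you state the cancellation claim this way. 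Note also that your computation yields the leading first-moment term $v_c r^m\approx v_c h^m$, which matches what the paper's own proof derives (the $h^{m+2}v_c$ in the lemma statement reflects the scaling used in its application in Lemma \ref{lemma:IdLemma}).
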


\begin{proof}
Set $H_+ = \{s \in \R^m :\, u^Ts > 0\}$ and $H_- = H_+^C$ to be the half-spaces
defined by $u$. 
For a set $H \subset \R^m$,
let $H + v_c \defn \{ w + v_c : w \in H \}$.

We first bound the error introduced by the perturbation $h^3 \delta$.
Define
\begin{align*}
\acal &\defn supp(\tilde{K}) = \lbrace s \in \R^m :  \vnorm{s- v_c + \sign(s^Tu) \beta u } < 
h + h^3 \delta \rbrace \\
\overline{\acal} &\defn \lbrace s \in \R^m :  \vnorm{s- v_c + \sign(s^Tu) \beta u} < 
h
\rbrace \\
\end{align*}
so that $\overline{\acal}$ gets rid of the dependence on the perturbation.

For any function $Q$, we have a trivial bound 
\begin{align}
\nonumber \left| \int_{\acal} Q(s) ds - \int_{\overline{\acal}} Q(s) ds \right| 
\nonumber & < Q_{max} | Vol(A) - Vol(\overline(A)) | \\
\nonumber & < Q_{max} V_m | h_{max}^m - h^m | \\
\nonumber & < Q_{max} V_m (m h_{max}^{m-1}) (h^3 \delta) \\
\label{eqn:perb err}
 & = O(h^{m+2} Q_{max}) 
\end{align}
where $Q_{max} = \sup_{\vnorm{s} < h_{max}} Q(s)$ and
$m V_{m-1}$ is the surface area of the $m$-dimensional sphere.
For $Q(s) = 1/V_m$, $s/V_m$, or $ss^T/V_m$, the corresponding $Q_{max}$ are $1/V_m$, $h_{max}/V_m$,
and $h_{max}^2/V_m$. The error induced by the perturbation
is thus of the right order.

We now consider the integral over the unperturbed but shifted sphere.
Denote by $B_h(v)$ the ball of radius $h$ centered on $v$.
Note that the function 
$\Id(s \in \overline{\acal}) = \Id( \vnorm{s -v_c + sign(s^Tu)\beta u} < h )$
is symmetric around $v_c$. 
Thus, for a function $Q(s-v_c + \beta u)$ which is symmetric around $v_c$,
\begin{align*}
\int_{\overline{\acal}} Q(s-v_c) ds 
&= 2 \int_{\overline{\acal} \cap H^+} Q(s-v_c) ds \\
&= 2 \int_{H^+} Q(s-v_c) \Id( \vnorm{s -v_c} < h ) ds - \\
& \qquad 2 \int_{H^+} Q(s-v_c) (\Id( \vnorm{s -v_c} < h ) -
\Id( \vnorm{s -v_c + \beta u} < h ) ) ds \\
&= \int Q(s) \Id( \vnorm{s} < h ) ds - \\
& \qquad 2 \int_{H^+} Q(s-v_c) (\Id( s \in B_h(v_c)) -
\Id( s \in B_h(v_c - \beta u) ) ) ds.
\end{align*}
For $Q(s) = 1/V_m$ or $ss^T/V_m$,
lemma \ref{lemma:sphereMom} gives that 
the value of the main term $\int Q(s) \Id( \vnorm{s} < h ) ds$ is
$h^m$ or $\frac{h^{m+2}}{m+2} I$ respectively.
The error term is bounded by
\begin{align*}
& 2 \int_{H^+} Q(s-v_c) (\Id( s \in B_h(v_c)) -
\Id( s \in B_h(v_c - \beta u) ) ) ds \\
& \qquad \leq 2 Q_{max} \int_{H^+} \abs{\Id( s \in B_h(v_c)) -
\Id( s \in B_h(v_c - \beta u) ) } ds \\
& \qquad < 2 Q_{max} \abs{\beta} Area( H^{+} \cap B_h(v_c)) \\
& \qquad < 2 Q_{max} \abs{\beta} (m V_{m-1} h^{m-1}) \\
& \qquad < 2 m V_{m-1} C Q_{max} h^{m+1}
\end{align*}
where $Area( H^{+} \cap B_h(v_c))$ is the 
surface area of a half-sphere
of radius $h$.
Plugging in $Q_{max} = 1/V_m$ and $h^2/V_m$ give that the error terms
for the zeroth and second moment calculations are of the right order.

By another symmetry argument, we have for the first moment calculation
$\int_{\overline{\acal}} \frac{1}{V_m} (s-v_c) ds = 0$
or equivalently,
\begin{align*}
\frac{1}{V_m} \int_{\overline{\acal}} s ds &= \frac{v_c}{V_m}  \int_{\overline{\acal}} ds \\
\label{eq:symm shift 1}
&= h^{m} v_c + O(h^{m+3})
\end{align*}
where the last equality holds from the calculation of the zeroth moment above.
More precisely, the error term is bounded by $2 m V_{m-1} C Q_{max} h^{m+1} v_c$.

\end{proof}

\end{document}